\documentclass[hidelinks,onefignum,onetabnum]{siamart220329}

\usepackage{lipsum}
\usepackage{amsfonts}
\usepackage{amssymb}
\usepackage{graphicx}
\usepackage{epstopdf}
\usepackage{bbm}
\usepackage{algorithmic}

\ifpdf
  \DeclareGraphicsExtensions{.eps,.pdf,.png,.jpg}
\else
  \DeclareGraphicsExtensions{.eps}
\fi

\newsiamremark{remark}{Remark}
\newsiamremark{hypothesis}{Hypothesis}
\crefname{hypothesis}{Hypothesis}{Hypotheses}
\newsiamthm{claim}{Claim}
\newsiamthm{assumption}{Assumption}
\newcommand{\rev}[1]{{\color{black} #1}}

\headers{Entropy-Regularized NPG with Function Approximation}{Semih Cayci, Niao He, R. Srikant}

\title{Convergence of Entropy-Regularized Natural Policy Gradient with Linear Function Approximation}

\author{Semih Cayci\thanks{Chair of Mathematics of Information Processing, RWTH Aachen University, Aachen, Germany (\email{cayci@mathc.rwth-acchen.de}).}
\and Niao He\thanks{Department of Computer Science, ETH Zurich, Zurich, Switzerland 
  (\email{niao.he@inf.ethz.ch}).}
\and R. Srikant\thanks{Department of Electrical and Computer Engineering, Coordinated Science Laboratory, University of Illinois at Urbana-Champaign, Urbana, IL, USA (\email{rsrikant@illinois.edu})}.}

\usepackage{amsopn}

\newcommand\blfootnote[1]{%
  \begingroup
  \renewcommand\thefootnote{}\footnote{#1}%
  \addtocounter{footnote}{-1}%
  \endgroup
}

\newcommand{\cS}{\mathbb{S}}
\newcommand{\cA}{\mathbb{A}}
\newcommand{\cP}{\mathcal{P}}
\newcommand{\cQ}{\mathcal{Q}}
\newcommand{\cV}{\mathcal{V
}}

\newcommand{\bE}{\mathbb{E}}
\newcommand{\bP}{\mathbb{P}}
\newcommand{\bR}{\mathbb{R}}
\newcommand{\ea}{\epsilon_{approx}}
\newcommand{\dopt}{d_\mu^{\pi^*}(s)\pi^*(a|s)}
\newcommand{\pit}{{\pi_{\theta}}}
\newcommand{\pitp}{{\pi_{\theta^\prime}}}

\ifpdf
\hypersetup{
  pdftitle={Convergence of Entropy-Regularized Natural Policy Gradient with Linear Function Approximation},
  pdfauthor={Semih Cayci, Niao He, and R. Srikant}
}
\fi

\externaldocument[][nocite]{ex_supplement}

\begin{document}

\maketitle
\blfootnote{\textbf{Funding:} This work was supported by NSF Grants CCF 22-07547, CCF 19-34986, CNS 21-06801, ONR Grant N00014-19-1-2566, and SNSF Project Funding No. 200021-207343.}
\begin{abstract}
Natural policy gradient (NPG) methods, equipped with function approximation and entropy regularization, achieve impressive empirical success in reinforcement learning problems with large state-action spaces. However, their convergence properties and the impact of entropy regularization remain elusive in the function approximation regime. In this paper, we establish  finite-time convergence analyses of entropy-regularized NPG with linear function approximation under softmax parameterization. In particular, we prove that entropy-regularized NPG with averaging satisfies the \emph{persistence of excitation} condition, and achieves a fast convergence rate of $\tilde{O}(1/T)$ up to a function approximation error in regularized Markov decision processes. This convergence result does not require any a priori assumptions on the policies. Furthermore, under mild regularity conditions on the concentrability coefficient and basis vectors, we prove that entropy-regularized NPG exhibits \emph{linear convergence} up to the compatible function approximation error. Finally, we provide sample complexity results for sample-based NPG with entropy regularization.
\end{abstract}

\begin{keywords}
reinforcement learning, policy gradient, natural policy gradient
\end{keywords}

\section{Introduction}
The goal of reinforcement learning (RL) is to sequentially maximize the expected total reward in a Markov decision process (MDP) \cite{sutton2018reinforcement, szepesvari2010algorithms, bertsekas1996neuro}. Policy gradient (PG) methods directly find the optimal policy in the parameter space by using gradient ascent \cite{williams1992simple, sutton1999policy, konda2000actor}, and they have demonstrated remarkable success in a broad class of challenging reinforcement learning problems such as chess, Go, healthcare applications, networking and robotics. The success largely benefits from  the versatility of PG methods in accommodating a rich class of parameterization and function approximation schemes \cite{mnih2016asynchronous, silver2016mastering, nachum2017trust, duan2016benchmarking}.

Among the variants of PG methods, natural policy gradient (NPG)~\cite{kakade2001natural,peters2008natural}, has been particularly popular. NPG uses Fisher information matrix for pre-conditioning the gradient steps and resembles a quasi-Newton method \cite{amari1998natural}. The idea of natural policy gradient has also been widely explored and generalized in many other RL algorithms~\cite{bhatnagar2007incremental,schulman2015trust,schulman2017proximal}.

Besides function approximation, the success of policy gradient methods has also been attributed to the use of \emph{entropy regularization}, a common algorithmic technique to encourage exploration of learning policies~\cite{haarnoja2018soft, nachum2017trust}.  
The impact of entropy regularization for policy gradient methods has been extensively studied recently, from both empirical and theoretical perspectives; see, e.g.~\cite{ahmed2019understanding,neu2017unified,geist2019theory,agarwal2020optimality,cen2020fast,mei2020global,lan2021policy}, just to name a few. However, existing results for the most part only studied the convergence properties of entropy regularized policy gradient methods in the \emph{tabular setting}, leaving a wide gap between the theory and the practice. 

In this paper, we aim to shed light on the theoretical effectiveness of the entropy regularization on policy gradient methods in the critical \emph{function approximation regime}. As a key stepping stone, we will focus on \emph{NPG with log-linear policy class}.

To the best of our knowledge, this is the first work that analyzes the convergence of NPG with entropy regularization in the function approximation regime.  Extensions to other PG methods and nonlinear function approximation are out of the scope of this work.

\subsection{Main Contributions}
In this work, we establish sharp non-asymptotic convergence bounds for entropy-regularized natural policy gradient under softmax parameterization with linear function approximation, and elucidate the theoretical benefits of entropy regularization in policy optimization.

Our main contributions include the following:
\begin{itemize}
\setlength\itemsep{0em}

    \item \textit{Fast $\tilde{O}(1/T)$ convergence of entropy-regularized NPG under a weak regularity condition:} We show that the persistence of excitation condition (i.e., all actions are explored with some probability bounded away from zero), is satisfied under entropy regularization. This condition ensures sufficient exploration, and consequently we prove that entropy-regularized NPG with averaging and gradient clipping achieves a $\tilde{O}(1/T)$ convergence rate up to a function approximation error under a very weak regularity condition on the concentrability coefficient.
    
    \item \textit{Linear convergence of entropy-regularized NPG with function approximation:} We prove that entropy-regularized NPG under softmax parameterization and linear function approximation achieves a much faster linear convergence rate $\exp(-\Omega(T))$ up to a function approximation error under additional but mild regularity conditions on the basis vectors and concentrability coefficient.
\end{itemize}

Finally, building on the analysis, we further characterize the convergence of NPG when the natural gradient can only be estimated from samples and computed inexactly. In particular, we extend our results to natural actor-critic methods based on entropy-regularized NPG for actor update and temporal difference learning for critic update. 

\subsection{Related Work}
\textit{NPG with function approximation:} In \cite{agarwal2020optimality}, unregularized NPG with softmax parameterization and linear function approximation was studied, and $O(1/\sqrt{T})$ convergence rate up to approximation and statistical errors is proved. Our analysis is inspired by \cite{agarwal2020optimality} to analyze entropy-regularized NPG. We prove that linear convergence rate is achieved under mild regularity conditions on the basis vectors and concentrability coefficient. In \cite{wang2019neural} and \cite{xu2020improving}, sublinear convergence rates for NPG with neural network approximation, and general (smooth) function approximation are proved, respectively, under similar concentrability assumptions. {In recent works \cite{alfano2022linear, yuan2022linear}, linear convergence of NPG with log-linear approximation was established without regularization. These results rely on the assumption of a bounded relative condition number, which stems from a using good initial state-action distribution for sampling, and appears in the error upper bound. Although the existence of such a good initial state-action distribution for sampling is proven in \cite{agarwal2020optimality}, no explicit construction is known. As we prove in this paper, under entropy regularization, such a strong assumption is not required because of sufficient exploration induced by regularization (see Lemma \ref{lemma:q-npg}), and one can achieve near-optimality under minimal conditions at the expense of a regularization bias.}

\textit{NPG/PG in the tabular setting:} The convergence properties of NPG in the tabular setting are relatively better understood compared to the function approximation setting \cite{agarwal2020optimality, bhandari2019global, cen2020fast, mei2020global, khodadadian2021linear}. In \cite{shani2020adaptive}, adaptive TRPO with decaying step-size was shown to achieve $O(1/T)$ convergence rate in the tabular setting. In \cite{cen2020fast}, linear convergence of tabular-NPG is proved by exploiting a relation to the policy iteration in the tabular setting. In another recent work, \cite{mei2020global} proves linear convergence of entropy-regularized tabular-PG by establishing a Polyak-{\L}ojasiewicz inequality. Similar results are obtained in~\cite{lan2021policy} for more general regularizers. The function approximation regime is fundamentally different than the tabular setting, and it is important to establish fast convergence rates in this regime \cite{cen2020fast}. In this work, we adopt a Lyapunov-drift approach, which makes use of potential functions, to prove fast convergence rates in the function approximation regime. This Lyapunov approach also enables us to study and explain the effectiveness of entropy regularization directly.

\subsection{Notation}
For a finite set $\cS$, we denote its cardinality as $|\cS|$. For a matrix $A\in\bR^{d\times d}$, we denote the {singular values} of $A$ in ascending order by $\sigma_1(A)\leq \sigma_2(A)\leq \ldots \leq \sigma_d(A)$. For two distributions $P, Q$, we denote $P\ll Q$ if $Q(A)=0$ implies $P(A)=0$ for any event $A$. We denote the Kullback-Leibler \rev{and $\chi^2$} divergences between any $P, Q$ as 
\begin{align*}
    \mathcal{D}_{\mathsf{KL}}(P\|Q) &= \bE_{x\sim P}\left[\log\frac{P(x)}{Q(x)}\right],\\
\chi^2(P\|Q) &= \bE_{x\sim Q}\left[\frac{(Q(x)-P(x))^2}{Q^2(x)}\right],
\end{align*}
respectively. The uniform distribution over a finite set $B$ is denoted as $\mathsf{Unif}(B)$.

\section{System Model and Algorithms}
In this section, we will introduce the reinforcement learning setting and natural policy gradient algorithm.

\subsection{Markov Decision Processes}
In this work, we consider a discounted Markov decision process $(\cS, \cA, \cP, r, \gamma)$, where $\cS$ and $\cA$ are the state and action spaces,  $\cP$ is a transition model, $r(s,a)\in[0,r_{max}],~(s,a)\in\cS\times\cA$ for some $r_{max}<\infty$ is the reward function, and $\gamma\in(0,1)$ is the discount factor. Specifically, upon taking an action $a\in\cA$ at state $\cS$, the controller receives a reward $r(s,a)$, and the system makes a transition into a state $s^\prime\in\cS$ with probability $\cP(s^\prime|s,a)$. In this work, we consider a finite but arbitrarily large state space $\cS$ for simplicity, and a finite action space $\cA$.

A stationary randomized \textit{policy} $\pi$ corresponds to a decision-making rule by specifying the probability $\pi(a|s)$ of taking an action $a\in\cA$ at a given state $s\in\cS$. A policy $\pi$ introduces a trajectory by specifying $a_t \sim \pi(\cdot|s_t)$ and $s_{t+1}\sim \cP(\cdot |s_t,a_t)$ given an initial state $s_0=s\in\cS$. The corresponding value function of a policy $\pi$ is as follows:
\begin{equation}
    \mathcal{V}^\pi(s) = \bE\Big[ \sum_{t=0}^\infty \gamma^t r(s_t,a_t) | s_0 = s\Big],
\end{equation}
where $a_t\sim\pi(\cdot|s_t)$ and $s_{t+1}\sim\cP(\cdot|s_t,a_t)$. For an initial state distribution $\mu$, we define (with a slight abuse of notation)
\begin{equation}
    \mathcal{V}^\pi(\mu) = \sum_{s\in\cS}\mu(s)\mathcal{V}^\pi(s).
\end{equation}
\textbf{Policy parameterization:} We consider softmax parameterization with linear function approximation. Namely, we consider the log-linear policy class $\Pi= \{\pi_\theta:\theta\in\bR^d\}$, where:
\begin{equation}
    \pi_\theta(a|s) = \frac{\exp(\theta^\top \phi_{s,a})}{\sum_{a^\prime\in\cA}\exp(\theta^\top \phi_{s,a^\prime})},
    \label{eqn:policy-class}
\end{equation}
for a set of $d$-dimensional basis vectors $\{\phi_{s,a}\in\bR^d:s\in\cS,a\in\cA\}$ with $\|\phi_{s,a}\|_2 \leq 1$ for all $(s,a)\in\cS\times\cA$, and policy parameter $\theta\in\bR^d$. Note that the policy class $\Pi$ is a restricted policy class, which is a strict subset of all stochastic policies \cite{agarwal2020optimality}.

\textbf{Entropy regularization:} The value function $\mathcal{V}^{\pi_\theta}(\mu)$ is a non-concave function of $\theta\in\bR^d$, and there exist suboptimal near-deterministic policies. In order to encourage exploration and evade suboptimal near-deterministic policies, entropy regularization is commonly used in practice \cite{silver2016mastering, haarnoja2018soft, nachum2017trust,ahmed2019understanding}. For a policy $\pi\in\Pi$, let
\begin{equation}
    \mathcal{H}^{\pi}(\mu) = \bE\Big[ \sum_{t=0}^\infty \gamma^t h\big(\pi(\cdot|s_t)\big)\Big|s_0\sim\mu \Big],
\end{equation}
where $h(\pi(\cdot|s)) = -\sum_{a\in\cA}\pi(a|s)\log\big(\pi(a|s)\big)$ is the entropy functional. Then, for $\lambda > 0$, the entropy-regularized value function is defined as follows:
\begin{equation}
    \mathcal{V}_\lambda^\pi(\mu) = \mathcal{V}^\pi(\mu) + \lambda \mathcal{H}^{\pi}(\mu).
    \label{equation:H-reg-vf}
\end{equation}
Note that the maximum-entropy policy, $\pi_{\mathbf{0}}(a|s) = 1/|\cA|$ for all $(s,a)\in\cS\times\cA$, maximizes the regularizer $\mathcal{H}^\pi(\mu)$. Hence, the additional $\lambda \mathcal{H}^\pi(\mu)$ term in \eqref{equation:H-reg-vf} encourages exploration increasingly with $\lambda > 0$. Since $\mathcal{H}^\pi(\mu)\leq \frac{\log|\cA|}{1-\gamma}$, $\lambda \leq \frac{(1-\gamma)\epsilon}{\log|\cA|}$ implies $\epsilon$-proximity to the unregularized value function \cite{cen2020fast}.

\textbf{Objective:} Our goal is to maximize the entropy-regularized value function in \eqref{equation:H-reg-vf} for a given $\lambda > 0$ and initial state distribution $\mu$:
\begin{equation}
    \theta^* = \arg\max_{\theta\in\bR^d} \mathcal{V}_\lambda^{\pi_\theta}(\mu).
\end{equation}
We denote the optimal policy as $\pi^* = \pi_{\theta^*}$ throughout the paper, and assume that $\|\theta^*\|_2 < \infty$, which automatically holds for sufficiently large $\lambda > 0$.

\textbf{Soft Q-function: }We define the soft Q-function and shifted Q-function under a policy $\pi$ as follows, respectively:
\begin{align}
    \mathcal{Q}_\lambda^\pi(s, a) &= r(s,a) + \gamma \sum_{s^\prime\in\cS}\cP(s^\prime|s,a)\mathcal{V}_\lambda^\pi(s^\prime),\\
    \label{eqn:qfunction} q_\lambda^\pi(s, a) &= \mathcal{Q}_\lambda^\pi(s, a) - \lambda\log\pi(a|s).
\end{align}
We have the following characterization of $\mathcal{V}_\lambda^\pi(\mu)$:
\begin{align*}
    \mathcal{V}_\lambda^\pi(\mu) = \sum_{s\in\cS,a\in\cA}\mu(s)\pi(a|s)\Big(\mathcal{Q}_\lambda^\pi(s,a)-\lambda\log\pi(a|s)\Big).
\end{align*}

We can bound the entropy-regularized value function as follows: 
\begin{equation}0 \leq \mathcal{V}_\lambda^{\pi_\theta}(\mu) \leq \frac{r_{max}+\lambda\log|\cA|}{1-\gamma},
\label{eqn:val-bounds}
\end{equation}
for any $\lambda > 0$ since $r\in[0,r_{max}]$ and ${H}(P)\leq \log|\cA|$ for any distribution $P$ over $\cA$.

\subsection{Policy Gradient Theorem and Compatible Function Approximation}
In order to define NPG with function approximation, it is useful to first characterize the policy gradient with respect to the parameter $\theta$.
For the initial state distribution $\mu$, we define the state visitation distribution as $$d_{\mu}^\pi(s) = (1-\gamma)\sum_{t=0}^\infty\gamma^t\bP_\pi(s_t = s|s_0\sim \mu).$$
We also define 
$$(d_\mu^\pi\circ\pi)(s,a) = d_\mu^\pi(s)\cdot \pi(a|s),$$ as the state-action visitation distribution under a policy $\pi$.

The following proposition characterizes the gradient of the entropy-regularized value function with respect to $\theta$. This is a direct extension of the policy gradient theorem to entropy-regularized value functions with linear function approximation \cite{sutton2018reinforcement, agarwal2020optimality}.
\begin{proposition}[Policy gradient]
For any $\theta\in\bR^d$, $\lambda > 0$ and initial state distribution $\mu$, we have:
\begin{equation}
    \nabla_\theta \mathcal{V}_\lambda^{\pi_\theta}(\mu) = \frac{1}{1-\gamma}\bE\Big[\nabla_\theta\log\pi_\theta(a|s)q_\lambda^{\pi_\theta}(s,a)\Big],
\end{equation}
where the expectation is taken over $s\sim d_\mu^{\pi_\theta},a\sim\pi_\theta(\cdot|s)$ and \begin{equation}\nabla_\theta\log\pi_\theta(a|s) = \phi_{s,a}-\sum_{a^\prime\in\cA}\pi_\theta(a^\prime|s)\phi_{s,a^\prime}.
\label{eqn:grad-log}
\end{equation}
\label{thm:policy-gradient}
\end{proposition}
By using Proposition \ref{thm:policy-gradient}, 
the NPG update can be computed by the following lemma, which is an extension of \cite{kakade2001natural, agarwal2020optimality}.
\begin{lemma}[Compatible function approximation]
Let \begin{equation}
    L(w, \theta) = \bE\Big[\Big(\nabla_\theta^\top\log\pi_\theta(a|s)w-q_\lambda^{\pi_\theta}(s,a)\Big)^2\Big],
    \label{eqn:loss}
\end{equation}
be the approximation error, and
$$G^{\pi_\theta}(\mu) = \bE\Big[\nabla_\theta\log\pi_\theta(a|s)\nabla_\theta^\top\log\pi_\theta(a|s)\Big],$$ be the Fisher information matrix under policy $\pi_\theta$, where the expectations are over $s\sim d_\mu^{\pi_\theta},a\sim \pi_\theta(\cdot|s)$. Then, we have:
\begin{equation}
    G^{\pi_\theta}(\mu)w_\lambda^{\pi_\theta} = (1-\gamma)\nabla_\theta \cV_\lambda^{\pi_\theta}(\mu),
    \end{equation}
    where
    \begin{equation}w_\lambda^{\pi_\theta} \in \arg\min_{w\in\bR^d}L(w, \theta),
    \label{eqn:cfa}
\end{equation}
for any $\theta \in\bR^d$.
\label{lemma:cfa}
\end{lemma}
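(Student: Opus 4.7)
The plan is to treat $L(\cdot,\theta)$ as a convex quadratic in $w$ for fixed $\theta$, write down its first-order optimality condition, and then match the resulting normal equation to the policy-gradient expression from Proposition~\ref{thm:policy-gradient}.

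First, I would observe that for each fixed $(s,a)$ the integrand $(\nabla_\theta^\top \log\pi_\theta(a|s)\, w - (Q_\lambda^{\pi_\theta}(s,a)-\lambda\log\pi_\theta(a|s)))^2$ is a convex quadratic in $w$; since $\|\nabla_\theta\log\pi_\theta(a|s)\|_2\le 2$ from \eqref{eqn:grad-log} together with $\|\phi_{s,a}\|_2\le 1$, and $|Q_\lambda^{\pi_\theta}-\lambda\log\pi_\theta|$ is uniformly bounded in view of \eqref{eqn:val-bounds} and the finiteness of $\|\theta\|_2$ entering the softmax, taking expectation over $(s,a)$ yields a finite convex quadratic $L(\cdot,\theta)$. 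Hence $\arg\min_w L(w,\theta)$ is nonempty and coincides with the solution set of $\nabla_w L(w,\theta)=0$. Differentiating under the expectation (justified by dominated convergence with the bounds above) and dividing by two gives the normal equation
\begin{equation*}
\mathbb{E}_{s\sim d_\mu^{\pi_\theta},\, a\sim\pi_\theta(\cdot|s)}\!\bigl[\nabla_\theta\log\pi_\theta(a|s)\nabla_\theta^\top\log\pi_\theta(a|s)\bigr]\, w_\lambda^{\pi_\theta} \;=\; \mathbb{E}_{s\sim d_\mu^{\pi_\theta},\, a\sim\pi_\theta(\cdot|s)}\!\bigl[\nabla_\theta\log\pi_\theta(a|s)\bigl(Q_\lambda^{\pi_\theta}(s,a)-\lambda\log\pi_\theta(a|s)\bigr)\bigr].
\end{equation*}

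The left-hand side is by definition $G_\lambda^{\pi_\theta}(\mu)\, w_\lambda^{\pi_\theta}$, and the right-hand side is exactly $(1-\gamma)\,\nabla_\theta V_\lambda^{\pi_\theta}(\mu)$ by Proposition~\ref{thm:policy-gradient}. Combining the two identifications delivers the claimed identity $G_\lambda^{\pi_\theta}(\mu)\,w_\lambda^{\pi_\theta} = (1-\gamma)\,\nabla_\theta V_\lambda^{\pi_\theta}(\mu)$.

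There is no substantive obstacle here; the result is essentially Kakade's compatible-function-approximation identity adapted to the entropy-regularized advantage, where the extra $-\lambda\log\pi_\theta(a|s)$ term in the target of the regression is precisely what the entropy-regularized policy gradient in Proposition~\ref{thm:policy-gradient} produces. The only subtlety worth flagging is that $G_\lambda^{\pi_\theta}(\mu)$ need not be invertible, in which case $w_\lambda^{\pi_\theta}$ is not unique; however every element of $\arg\min_w L(w,\theta)$ satisfies the same normal equation, so the image $G_\lambda^{\pi_\theta}(\mu)\,w_\lambda^{\pi_\theta}$ is the same for any minimizer, which is what \eqref{eqn:cfa} asserts.
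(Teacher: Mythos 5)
Your proposal is correct and follows essentially the same route as the paper's proof: take the first-order optimality condition $\nabla_w L(w,\theta)=0$, identify the resulting normal equation's left side with $G_\lambda^{\pi_\theta}(\mu)w_\lambda^{\pi_\theta}$ and its right side with $(1-\gamma)\nabla_\theta V_\lambda^{\pi_\theta}(\mu)$ via Proposition~\ref{thm:policy-gradient}. Your added remarks on convexity, differentiation under the expectation, and the well-definedness of $G_\lambda^{\pi_\theta}(\mu)w_\lambda^{\pi_\theta}$ when the minimizer is non-unique are sound refinements the paper leaves implicit.
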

The proof of Lemma \ref{lemma:cfa} can be found in Appendix \ref{app:preliminaries}.

\subsection{Entropy-Regularized NPG}
For a constant step-size $\eta > 0$, the natural policy gradient algorithm updates the parameter according to the following update \cite{kakade2001natural}:
\begin{equation}
    \theta \leftarrow \theta + \eta [G^\pit(\mu)]^{\dagger}\nabla_\theta \mathcal{V}_\lambda^\pit(\mu),
\end{equation}
where $[G^\pit(\mu)]^\dagger$ denotes the Moore-Penrose pseudoinverse of $G^\pit(\mu)$.
Equivalently, based on Lemma~\ref{lemma:cfa}, the update rule under NPG can be expressed as follows:
\begin{equation}
    \theta \leftarrow \theta + \frac{\eta}{1-\gamma} w_\lambda^{\pi_{\theta}},
\end{equation}
where $w_\lambda^{\pi_{\theta}}$ is obtained from \eqref{eqn:cfa}.
The pseudocode for NPG with a constant step-size $\eta > 0$ is given in Algorithm \ref{alg:npg}. For any $t\geq 0$, we denote $\pi_t = \pi_{\theta_t}$ throughout the paper.
\begin{algorithm}[t]
\caption{Entropy-regularized NPG}
 \label{alg:npg}
 \begin{algorithmic}
\STATE{Inputs: Step-size $\eta > 0$}
\STATE{Initialization: {$\theta_0 = \textbf{0}$ or equivalently $\pi_0(a|s)=\frac{1}{|\cA|}$ for all $(s, a)\in\cS\times\cA$}}
 \FOR{$t < T$}
    \STATE{Compute $w_t = w_\lambda^{\pi_t}$ by using \eqref{eqn:cfa}}
   \STATE{$\theta_{t+1} = \theta_t + \eta w_t$}
\ENDFOR
\end{algorithmic}
\end{algorithm}

\subsection{Entropy-Regularized NPG with Averaging}
In the following,  we introduce a slight modification of  entropy-regularized NPG with averaging and gradient clipping, summarized in Algorithm \ref{alg:q-npg}.

Starting with $\theta_0 = 0$, for a given sequence of iterates $\{\theta_k:k \leq t\}$ and step-sizes $\{\eta_t:t \geq 0\}$, the entropy-regularized NPG with averaging update is as follows:
\begin{align}
\begin{aligned}
    \theta_{t+1} &= \theta_t + \eta_t g_t,\\
    &= (1-\eta_t \lambda)\theta_t + \eta_t w_t,
\end{aligned}
    \label{eqn:q-npg-update}
\end{align}
where $g_t = w_t-\lambda\theta_t,$ 
and 
\begin{equation}
    w_t = \underset{w\in\bR^d:\|w\|_2 \leq R}{\arg\min}\bar{L}(w,\theta_t).
    \label{eqn:gradient-q-npg}
\end{equation}
for a given projection radius $R>0$. This variant of NPG is a stochastic approximation algorithm (see Remark \ref{remark:sa}). Projection step with radius $R>0$ in conjunction with the averaging in \eqref{eqn:q-npg-update} provides regularization, i.e., a direct control over $\|\theta_t\|_2$ in terms of $R$ and $\lambda$ (see Lemma \ref{lemma:q-npg}). 

Unlike \eqref{eqn:loss}, here we set 
\begin{equation*}\bar{L}(w, \theta) = \bE_{(s,a)\sim d_\mu^\pit\circ\pit}\Big[\big(w^\top \nabla\log\pit(a|s)-\Xi_\lambda^\pit(s,a)\big)^2\Big],
\end{equation*}
where $$\Xi_\lambda^\pi(s,a) = \mathcal{Q}_\lambda^\pi(s,a) - \bE_{a^\prime\sim\pi(\cdot|s)}\mathcal{Q}_\lambda^\pi(s,a^\prime).$$

\begin{algorithm}[t]
\caption{Entropy-regularized NPG with averaging}
 \label{alg:q-npg}
\begin{algorithmic}
\STATE{Inputs: Radius $R>0$, step-sizes $\{\eta_t = \frac{1}{(t+1)\lambda}: t\geq 0\}$}
 \STATE{Initialization: {$\theta_0 = \textbf{0}$ or equivalently $\pi_0(a|s)=\frac{1}{|\cA|}$ for all $(s, a)\in\cS\times\cA$}}
 \FOR{$t < T$}
    \STATE{Compute $w_t$ by using \eqref{eqn:gradient-q-npg}}
  
    \STATE{Set $g_t = w_t-\lambda\theta_t$}
   \STATE{$\theta_{t+1} = \theta_t + \eta_t g_t$}
\ENDFOR
\end{algorithmic}
\end{algorithm}

In the following, we establish a connection between NPG with averaging and stochastic approximation to provide an intuition about the algorithm.

\begin{remark}[Stochastic approximation interpretation]
Note that the updates of NPG with averaging, described in \eqref{eqn:q-npg-update}, can be rewritten as follows:
\begin{equation}
    \theta_{t+1}^\top \phi_{s,a} = (1-\bar{\eta}_t)\theta_t^\top \phi_{s,a} + \bar{\eta}_t\frac{w_t^\top \phi_{s,a}}{\lambda},~\forall (s,a)\in\cS\times\cA,
    \label{eqn:stochastic-approximation}
\end{equation}
where $\bar{\eta}_t = \eta_t\lambda$. From \eqref{eqn:gradient-q-npg}, it can be seen that $w_t^\top\phi_{s,a}\approx \cQ_\lambda^{\pi_t}(s,a)$ with respect to the weighted $\ell_2$-norm. In that respect, entropy-regularized NPG with averaging is a stochastic approximation variant with the step-size sequence $\bar{\eta}_t = \frac{1}{t+1},~t\geq 0$, which satisfies $\sum_t \eta_t = \infty$ and $\sum_t\eta_t^2 < \infty$ \cite{bertsekas1996neuro}. It is straightforward to show that if \eqref{eqn:stochastic-approximation} converges, it converges to the stationary point $\bar{\theta}\in\bR^d$ that satisfies $\phi_{s,a}^\top\bar{\theta} \approx \frac{\cQ_\lambda^{\pi_{\bar{\theta}}}(s,a)}{\lambda}$, yielding the optimal policy $\pi^*$ under the realizability assumption $\pi^* = \pi_{\theta^*}$ \cite{nachum2017bridging}. In this paper, we provide a finite-time analysis of this algorithm.

Also, note that the entropy-regularized NPG with averaging in \eqref{eqn:q-npg-update} can be written as:
\begin{equation*}
    \pi_{t+1}(a|s) = \frac{1}{Z_t(s)}\big(\pi_t(a|s)\big)^{1-\eta_t\lambda}\exp\Big(\bar{\eta}_t\frac{w_t^\top\phi_{s,a}}{\lambda}\Big),~(s,a)\in\cS\times\cA,
\end{equation*}
for some normalization term $Z_t(s)$. Since $w_t^\top\phi_{s,a} \approx \cQ_\lambda^{\pi_t}(s,a)$, the above iterations are analogous to the \textit{tabular} entropy-regularized NPG iterations analyzed in \cite{cen2020fast}. 
\label{remark:sa}
\end{remark}

\begin{remark}[Baseline]\normalfont
$\mathcal{Q}_\lambda^\pi(s,a)$ is biased in the sense that $\bE_{a\sim\pi}\mathcal{Q}_\lambda^\pi(s,a) \neq 0$.
In \eqref{eqn:gradient-q-npg}, we use $b(s) = \bE_{a\sim\pi(\cdot|s)}[\mathcal{Q}_\lambda^\pi(s,a)]$ as a baseline, which is a common variance reduction technique in policy gradient algorithms \cite{sutton2018reinforcement}.

\end{remark}

In the following section, we present the main convergence results in this paper.

\section{Main Results}\label{sec:main-results}
In this section, we establish the convergence rates for the entropy-regularized NPG methods introduced in the previous section.

First, we prove that the entropy-regularized NPG with averaging achieves $\tilde{O}(1/T)$ convergence rate under minimal assumptions in the deterministic setting. Then, we show that with additional but mild regularity conditions on the basis vectors and concentrability coefficient, the entropy-regularized NPG can achieve much faster convergence. 

\subsection{Convergence of Entropy-Regularized NPG with Averaging}
For the convergence of entropy-regularized NPG with averaging, we make the following assumption.
\begin{assumption}[Concentrability coefficient for state-visitation]
We assume that the initial state distribution $\mu$ satisfies $$\left\|\frac{d^{\pi^*}}{\mu}\right\|_\infty= \sup_{s\in\cS}\frac{d_\mu^{\pi^*}(s)}{\mu(s)} < \infty.$$

\label{assumption:conc-coeff}
\end{assumption}

\begin{lemma}[Persistence of excitation]
 For any $\lambda > 0$ and $R > 0$, the following bounds are satisfied under entropy-regularized NPG with averaging:
 \begin{equation*}
     \sup_{t\geq 0}\|\theta_t\|_2 \leq R/\lambda,
 \end{equation*}
 and
 \begin{equation*}
     \inf_{t \geq 0}\min_{(s,a)\in\cS\times\cA}\pi_t(a|s) = p_{min} \geq \frac{\exp(-2R/\lambda)}{|\cA|} > 0.
 \end{equation*}
 \label{lemma:q-npg}
\end{lemma}

\begin{proof}
Note that a recursive calculation of $$\theta_{t+1} = (1-\eta_t\lambda)\theta_t + \eta_tw_t,$$ with the diminishing step-size choice $\{\eta_t: t \geq 0\}$ leads to $$\theta_t = \frac{1}{\lambda t}\sum_{k < t}w_k,$$ for $t \geq 1$. By gradient clipping in \eqref{eqn:gradient-q-npg}, which implies $\|w_t\|_2 \leq R$, and triangle inequality, we have $\|\theta_t\|_2 \leq R/\lambda$ for all $t \geq 0$. In order to find a lower bound for $p_{min}$, observe that $$\|\theta_t^\top \phi_{s,a}\|_2 \leq R/\lambda,$$ for any $t\geq 0$ by Cauchy-Schwarz inequality since $\|\phi_{s,a}\|_2 \leq 1$ and $\|\theta_t\|_2 \leq R/\lambda$. Under softmax parameterization, this implies $\pi_t(a|s) \geq e^{-2R/\lambda}/|\cA|$ for any $t\geq 0$ and $(s,a)\in\cS\times\cA$.
\end{proof}
Lemma \ref{lemma:q-npg} implies that the policy parameter $\theta_t$ is uniformly bounded throughout the trajectory, which leads to a positive probability of exploration for all states for $\lambda > 0$. Such property is key for the convergence of policy gradient methods \cite{bhandari2019global, agarwal2020optimality}. Note that Lemma \ref{lemma:q-npg} is a direct consequence of averaging and gradient clipping (see Equation \eqref{eqn:q-npg-update}), and it still holds with probability 1 under approximate NPG, where sample-based estimation is used for finding $w_t$ at each iteration. { As the regularization coefficient $\lambda \rightarrow 0$, the minimum exploration probability $p_{min}$ also goes to 0.}

Before presenting the main result, we first give several definitions. 
\begin{definition}
For any given $R > 0$, define:
\begin{equation}
    \epsilon(R) = \sup_{t \geq 0}\min_{w:\|w\|_2 \leq R}\bar{L}(w,\theta_t),
\end{equation}
where $\bar{L}$ is the function approximation error in \eqref{eqn:gradient-q-npg}.
\label{eqn:approx-error}
\end{definition}
Note that $\epsilon(R)$ is always bounded since $\cQ_\lambda^\pit(s, a)$ is uniformly bounded  by $r_{max} + \gamma\frac{r_{max}+\lambda\log|\cA|}{1-\gamma}$, which follows from \eqref{eqn:val-bounds}.

Our analysis relies on the following Lyapunov function, which is used in mirror descent analysis in supervised learning and reinforcement learning problems \cite{martens2020new, wang2019neural, agarwal2020optimality}.

\begin{definition}[Lyapunov function]
For any $\pi\in\Pi$, we define the potential function $\Phi:\Pi\rightarrow \bR^+$ as follows:
\begin{align*}
    \Phi(\pi) &= \sum_{s\in\cS}d_\mu^{\pi^*}(s)\sum_{a\in\cA}\pi^*(a|s)\log\frac{\pi^*(a|s)}{\pi(a|s)}, \\ &= \sum_{s\in\cS}d_\mu^{\pi^*}(s)\mathcal{D}_{\mathsf{KL}}(\pi^*(\cdot|s)\|\pi(\cdot|s)).
\end{align*}
where $\mathcal{D}_{\mathsf{KL}}$ is the Kullback-Leibler divergence.
\label{def:pf}
\end{definition}
Note that $\Phi(\pi)$ is a divergence measure which measures the proximity of a policy $\pi$ to the optimal policy $\pi^*$. We have $\Phi(\pi)\geq 0$ for all $\pi$, and $\Phi(\pi) = 0$ iff $\pi(\cdot|s) = \pi^*(\cdot|s)$ for all $s\in supp(d_\mu^{\pi^*})$.

\begin{lemma}[Lyapunov drift]
 For any $t \geq 0$, consider a general update $\theta_{t+1} = \theta_t + \eta_t g_t$ under softmax parameterization with linear function approximation. Then, we have the following Lyapunov drift inequality:
\begin{multline}
    \Phi(\pi_{t+1})-\Phi(\pi_t) \leq - \eta_t\lambda\Phi(\pi_t)- \eta_t(1-\gamma)\Delta_t \\-\eta_t\bE_{(s,a)\sim d_\mu^{\pi^*}\circ\pi^*}\left[ \Big(\nabla_\theta^\top\log\pi_t(a|s)g_t-q_\lambda^{\pi_t}(s,a)\Big)\right]
    -\eta_t\bE_{s\sim d_\mu^{\pi^*}}\cV_\lambda^{\pi_t}(s) +\frac{1}{2}\eta_t^2\|g_t\|_2^2,
    \label{eqn:drift}
\end{multline}
where $\Delta_t = \cV_\lambda^{\pi^*}(\mu)-\cV_\lambda^{\pi_t}(\mu)$.
\label{lemma:lyapunov}
\end{lemma}
The negative drift term $-\lambda\eta_t\Phi(\pi_t)$ in \eqref{eqn:drift}, which stems from entropy regularization, leads to a recursion for $\{\Phi(\pi_t):t\geq 0\}$, which is key for fast convergence rates. The proof of Lemma \ref{lemma:lyapunov} can be found in Appendix \ref{app:conv-analysis}.

In the following main theorem, we show that entropy-regularized NPG with averaging achieves an improved $\tilde{O}(1/T)$ convergence rate up to the function approximation error $\epsilon(R)$. 

\begin{theorem}[Convergence of entropy-regularized NPG with averaging]
    Under Assumption \ref{assumption:conc-coeff}, for any $T > 0$, $\lambda > 0$ and $R > 0$, the entropy-regularized NPG with averaging under the step-size sequence $\eta_t = \frac{1}{\lambda(t+1)}$ achieves the following bounds:
    \begin{equation*}
        \Phi(\pi_T) \leq \frac{\sqrt{\frac{1}{T}\sum\limits_{t=1}^T\bE_{s\sim d_\mu^\star}\chi^2(\pi^*(\cdot|s)\|\pi_t(\cdot\|s))\frac{1}{1-\gamma}\|d_\mu^{\pi^*}/\mu\|_\infty\epsilon(R)}}{\lambda} + \frac{2R^2(1+\log T)}{\lambda^2 T},
        \end{equation*}
        and \begin{align*}
        \min_{0\leq t < T}\Delta_t &\leq \frac{\sqrt{\frac{1}{T}\sum\limits_{t=1}^T\bE_{s\sim d_\mu^\star}\chi^2(\pi^*(\cdot|s)\|\pi_t(\cdot\|s))\frac{1}{1-\gamma}\|d_\mu^{\pi^*}/\mu\|_\infty\epsilon(R)}}{1-\gamma}+\frac{2R^2(1+\log T)}{(1-\gamma)\lambda T},\\
        &\leq \sqrt{\frac{2\cdot\epsilon(R)\cdot\|d_\mu^{\pi^*}/\mu\|_\infty}{(1-\gamma)^3\cdot p_{min}}}+\frac{2R^2(1+\log T)}{(1-\gamma)\lambda T},
    \end{align*}
    where $\Delta_t = \cV_\lambda^{\pi^*}(\mu)-\cV_\lambda^{\pi_t}(\mu)$.
    \label{thm:q-npg}
\end{theorem}
The proof of Theorem \ref{thm:q-npg} can be found in Appendix \ref{app:conv-analysis}. {In the proof, we used the Lyapunov function in \cite{agarwal2020optimality} to show that NPG update with function approximation under entropy regularization leads to an approximate pseudo-contraction in terms of the Lyapunov function $\Phi$ with modulus $(1-\lambda_t\eta)$ with controllable extra terms, which enabled fast convergence, and also lead to the persistence of excitation condition in Lemma \ref{lemma:q-npg}, which eliminated the trajectory-dependent strong distribution mismatch assumptions in the literature (see Remark \ref{remark:dist-mismatch} also).}

We have the following observations from Theorem \ref{thm:q-npg}.

\begin{remark}[Impact of entropy regularization]\normalfont
Note that for any $R > 0$, increasing the regularization parameter (or temperature) $\lambda$ encourages the policy to be more exploratory, and $\lambda\rightarrow\infty$ leads to the maximum entropy policy as expected. On the other hand, the algorithm converges to the optimal entropy-regularized value function $\cV_\lambda^\pit(\mu)$ as $\lambda\to0$, and increasing $\lambda$ may lead to a larger error in terms of unregularized value function.
\end{remark}

\begin{remark}[Function approximation error and the tabular case]\normalfont
The representation power of the function approximation in approximating $\cQ_\lambda^\pi$ determines the optimality gap in Theorem \ref{thm:q-npg}. In Theorem \ref{thm:q-npg}, we observe that the concentrability coefficient $M$ also has an impact on the function approximation error. {A large projection radius $R$ leads to a smaller function approximation error $\epsilon(R)$, but also leads to larger $p_{min}^{-1}$ and $\frac{2R^2(1+\log T)}{(1-\gamma)\lambda T}$ terms in the upper bound. As such, the right choice of $R$, which requires the knowledge of the $\ell_2$-norm of the best parameter that leads to a good approximation of $\mathcal{Q}_\lambda^\pi$, is important, and a sharper prior knowledge for $R$ leads to better results by Theorem \ref{thm:q-npg}.} \rev{Also, note that in the tabular case, where $d=|\cS\times\cA|$ and the feature matrix with rows $\phi_{s,a}^\top$ is full-rank, the function approximation error is $0$ since there exists $w\in\bR^d$ such that $w^\top\phi_{s,a}=\cQ_\mu^{\pi_t}(s,a)$ for every $(s,a)\in\cS\times\cA$. In this case, we have $\epsilon(R)=0$ for a sufficiently large but finite $R>0$ in Theorem \ref{thm:q-npg}.}
 \end{remark}

\rev{
In the following, we show that the optimality gap scales with $\log(1/p_{min})$, which shows a sharper bound of order $O(R/\lambda)$ rather than $O(\exp(2R/\lambda))$.
\begin{proposition}
    Under Assumption \ref{assumption:dist-mismatch}, for any $T > 0, \lambda > 0$ and $R>0$, the entropy-regularized NPG with averaging under the step-size sequence $\eta_t=\frac{1}{\lambda(t+1)}$ achieves the following bound
    \begin{equation*}
        (1-\gamma)\min_{0\leq t < T}\Delta_t\leq \frac{2}{T}\sum_{t<T}\log\left(\frac{\left\|d_\mu^{\pi^*}/\mu\right\|_\infty M_t(w_t)}{p_{min}(1-\gamma)}\right)+\frac{2R^2(1+\log T)}{\lambda T},
    \end{equation*}
    where $M_t(w_t)$ is the moment-generating function of $\mathsf{err}_t(s,a)=|w_t^\top \phi_{s,a}-\cQ_\lambda^{\pi_t}(s,a)|$ with exponent 1 under the distribution $d_\mu^{\pi_t}\circ\pi_t$ at time $t$.
    \label{prop:donsker-varadhan}
\end{proposition}
The proof of Prop. \ref{prop:donsker-varadhan}, which can be found in Appendix \ref{app:conv-analysis}, follows from using a change-of-measure argument based on the Donsker-Varadhan variational principle in the Lyapunov drift (Lemma \ref{lemma:lyapunov}) to characterize distributional shift \cite{donsker1983asymptotic, hellstrom2023generalization}. Obviously, for all $t\geq 0$, $\log M_t(w_t)$ exists and is bounded by the function approximation error, i.e., $\log M_t(w_t) \leq 2\max_{s,a}|w_t^\top\phi_{s,a}-\cQ_\lambda^{\pi_t}(s,a)|\leq 2R+\frac{2(r_{max}+\lambda\log|\mathbb{A}|)}{1-\gamma}$.

Proposition \ref{prop:donsker-varadhan} shows an upper bound that scales at a rate $\log(1/p_{min})$ rather than $1/p_{min}^{1/2}$, which is important to characterize the upper bound in the regime $\lambda \rightarrow 0$. Proposition \ref{prop:donsker-varadhan} implies a linear bound in $R/\lambda$. 
}

\rev{Finally, for comparison with different distribution mismatch assumptions in the literature, we consider a variant of entropy-regularized NPG with a slightly different policy update, extending the Q-NPG (unregularized) algorithm in \cite{agarwal2020optimality} by incorporating entropy regularization. For any given distribution $\nu$ over $\cS\times\cA$ and policy $\pi$, let $\breve{d}^\star(s,a)=\frac{1}{|\cA|}d_\mu^{\pi^*}(s)$ for all $(s,a)\in\cS\times\cA$, and
\begin{align*}
    \breve{d}_\nu^\pi(s,a)&=(1-\gamma)\sum_{t=0}^\infty\bP^\pi(s_t=s,a_t=a|(s_0,a_0)\sim\nu),\\
    \breve{L}(w,\theta)&=\bE_{(s,a)\sim\breve{d}_\nu^{\pi_t}}(w^\top\phi_{s,a}-\cQ_\lambda^{\pi_t}(s,a))^2.
\end{align*}
Then, we perform the policy update \eqref{eqn:q-npg-update} with
\begin{equation}
    w_t\in\arg\min_{w\in\bR^d:\|w\|_2\leq R}\breve{L}(w,\theta_t),~t=0,1,\ldots.
    \label{eqn:er-qnpg}
\end{equation}
The following result directly follows from Lemma \ref{lemma:lyapunov} and the proof of Theorem \ref{thm:q-npg}.
\begin{proposition}[Entropy-regularized Q-NPG]
    Under the assumption that 
\begin{equation}
  \max_{s,a}\frac{d_\mu^{\pi^*}(s)/|\cA|}{\nu(s,a)}=\left\|\frac{\breve{d}^\star}{\nu}\right\|_\infty<\infty,
  \label{eqn:agarwal-assumption}
\end{equation}
the entropy-regularized Q-NPG algorithm with update \eqref{eqn:er-qnpg} for any $\lambda > 0$, $R>0$, $T > 0$ with the step-size $\eta_t=\frac{1}{\lambda(t+1)}$ achieves 
\begin{equation}
    \min_{0\leq t < T}\Delta_t\leq \frac{2}{(1-\gamma)^{3/2}}\sqrt{\breve{\epsilon}(R)\cdot|\cA|\cdot\|\breve{d}^\star/\nu\|_\infty}+\frac{2R^2(1+\log T)}{(1-\gamma)\lambda T},
    \label{eqn:agarwal-npg}
\end{equation}
where $\max_{0\leq t<T}\min_{w\in\bR^d:\|w\|_2\leq R}\breve{L}(w,\theta_t)= \breve{\epsilon}(R)$.
\label{prop:er-qnpg}
\end{proposition}
The proof of Prop. \ref{prop:er-qnpg} can be found in Appendix \ref{app:conv-analysis}. We note that one can also use the unbiased version with updates $w_t\in\arg\min_w\bE_{(s,a)\sim\breve{d}_\nu^{\pi_t}}(w^\top\nabla\log\pi_t(a|s)-\Xi_\lambda^{\pi_t}(s,a))^2$ that yields a similar bound.

\begin{remark}[Concentrability coefficient and exploration]\normalfont
Our Assumption \ref{assumption:conc-coeff} is significantly milder compared to the distribution mismatch assumptions in the literature (see \cite{scherrer2014approximate, shani2020adaptive} for a discussion), and it is stated that such an exploratory initial state distribution $\mu$ is indeed necessary for the convergence of policy gradient methods \cite{bhandari2019global}. In the existing works, the convergence results are established under a strong assumption that
\begin{equation}
\bE_{s\sim d_\mu^{\pi_t}, a\sim \pi_t(\cdot|s)}\Big[\Big(\frac{d_\mu^{\pi^*}(s)\pi^*(a|s)}{d_\mu^{\pi_t}(s)\pi_t(a|s)}\Big)^2\Big]<\infty,~t=0,1,\ldots,
\label{eqn:assumption-strong-cc}
\end{equation}
 which assumes that $\pi_t$ performs sufficient exploration at each policy optimization step $t=0,1,\ldots$ to ensure $\pi^*(\cdot|s)\ll\pi_t(\cdot|s),~s\in\cS$  \cite{wang2019neural, liu2019neural, chen2019information, fan2020theoretical}. On the other hand, our Assumption \ref{assumption:conc-coeff} is completely independent of the policy trajectory $\{\pi_t\}$ under the NPG, as it is basically an assumption \emph{only} on the initial state distribution $\mu$. The key result to prove convergence under this weak Assumption \ref{assumption:conc-coeff} is the persistence of excitation in Lemma \ref{lemma:q-npg}, which asserts that entropy-regularized NPG performs sufficient exploration to ensure convergence, rather than assuming that NPG iterates perform exploration in the form of \eqref{eqn:assumption-strong-cc}.

 The convergence result in \eqref{eqn:agarwal-npg}, which is an extension of Q-NPG in \cite{agarwal2020optimality} to entropy regularization, is under a different distribution mismatch assumption \eqref{eqn:agarwal-assumption}. In this case, the convergence of this algorithm heavily relies on the exploratory nature of the initial state-action distribution $\nu$. Note that this variant is not exactly the NPG, since the original NPG update $[G^{\pi_t}(\mu)]^\dagger\nabla\cV_\lambda^{\pi_t}(\mu)$ is the solution of \eqref{eqn:cfa} under the distribution $d_\mu^{\pi_t}\circ\pi_t$ (by Lemma \ref{lemma:cfa}) while the variant in \eqref{eqn:er-qnpg} computes the policy update under a different state-action distribution $\breve{d}_\nu^{\pi_t}$. Accordingly, the distribution mismatch and function approximation notions differ considerably with respect to the original NPG that we analyzed in Theorem \ref{thm:q-npg}.
 
 \label{remark:dist-mismatch}
\end{remark}

}
 
 \subsection{Linear Convergence of Entropy-Regularized NPG}
Under additional regularity assumptions on the distribution mismatch and basis vectors compared to Theorem \ref{thm:q-npg}, we will prove that entropy-regularized NPG achieves linear convergence rate $O(e^{-\Omega(T)})$ up to the compatible function approximation error.

The following assumption is standard in reinforcement learning literature \cite{wang2019neural,liu2019neural}.
\begin{assumption}[Concentrability coefficient]
Let the concentrability coefficient be defined as
$$C_t = \chi^2(d_\mu^{\pi^*}\circ\pi^*\|d_\mu^{\pi_t}\circ\pi_t).$$
We assume that there exists a constant $C^\star < \infty$ such that $C_t \leq C^\star$ for all $t$.
\label{assumption:dist-mismatch}
\end{assumption}
Note that Assumption \ref{assumption:dist-mismatch} is stronger than Assumption \ref{assumption:conc-coeff} as it requires exploratory behavior of the policies throughout the iterations.

We will prove linear convergence under a mild regularity condition on the parametric model \eqref{eqn:policy-class}, which we present in the following.
\begin{assumption}[Regularity of the parametric model]
We assume that $G^{\pi_0}(\mu)$ is non-singular where $\pi_0(a|s) = 1/|\cA|$ for all $(s, a)\in\cS\times\cA$.
\label{assumption:basis}
\end{assumption}
Assumption \ref{assumption:basis} is a regularity condition on the basis functions $\{\phi_{s,a}:(s,a)\in\cS\times\cA\}$. Similar regularity conditions, such as boundedness of the relative condition number, are assumed in the RL literature \cite{agarwal2020optimality}.

\begin{remark}[Regularity of random features]\normalfont
An important class of basis vectors is random features, which have fundamental importance in kernel-based estimation and the analysis of neural networks \cite{rahimi2007random, jacot2018neural}. In the following, we consider an example of random features, and show that Assumption \ref{assumption:basis} holds with high probability in the function approximation regime.
\begin{proposition}[Regularity of Gaussian Random Features]
Consider an ensemble of random features with $\phi_{s,a} \sim \mathcal{N}(0,I_d)$ for all $(s,a)\in\cS\times\cA$. For $\mu = \mathsf{Unif}(\cS)$, $|\cA| =  2$ and $\delta\in(0,1)$, we have:
\begin{equation*}
    \sigma_1(G^{\pi_0}(\mu)) \geq \frac{1-\gamma}{8}\Big(\frac{1}{2}-\sqrt{\frac{\log(1/\delta)}{2|\cS|}}-\sqrt{\frac{16d\log(|\cS|)}{|\cS|}}\Big),
\end{equation*}
with probability at least $1-\delta$.
\label{prop:gaussian-regularity}
\end{proposition}
Proposition \ref{prop:gaussian-regularity} implies that in the function approximation setting where $d \ll |\cS\times \cA|$, the ensemble of random feature vectors satisfies the regularity condition in Assumption \ref{assumption:basis} with high probability. The analysis is based on Rademacher complexity bounds, and can be used to extend Proposition \ref{prop:gaussian-regularity} to general $\mu$ and $\cA$. We prove Proposition \ref{prop:gaussian-regularity}, and also numerically investigate the regularity of the neural tangent kernel (NTK) features in Appendix \ref{app:conv-analysis}.
\end{remark}

\begin{lemma}[Non-singularity lemma]
    Under Assumptions \ref{assumption:dist-mismatch}-\ref{assumption:basis}, there exists a constant $\sigma > 0$ such that the following holds under Algorithm \ref{alg:npg}:
    \begin{equation}
        \inf_{t\geq 0}\min_{i\in[d]}\sigma_i\Big(G^{\pi_t}(\mu)\Big) \geq \sigma,\end{equation}
    where the constant step-size is:
    \begin{equation}
        \eta \leq \min\Big\{ \frac{(1-\gamma)\sigma^2r_{min}}{(r_{max}+\lambda\log|\cA|)^2},\frac{1}{2\lambda}\Big\}.
        \label{eqn:step-size}
    \end{equation}
    \label{lemma:prob-bounds}
\end{lemma}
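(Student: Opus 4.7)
My plan is to prove the two bounds jointly by induction on $t$, taking the probability bound $\pi_t(a|s) \geq p$ (for all $s \in \mathrm{supp}(\mu)$ and $a \in \cA$) as the inductive hypothesis and deriving the eigenvalue bound on $G_\lambda^{\pi_t}(\mu)$ as a consequence. The base case $t=0$ is immediate from $\pi_0(a|s) = 1/|\cA|$ for any $p \leq 1/|\cA|$.

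For the reduction from probabilities to the Fisher information matrix, I would write
$G_\lambda^{\pi_t}(\mu) = \bE_{s \sim d_\mu^{\pi_t}}\bigl[\mathrm{Cov}_{a \sim \pi_t(\cdot|s)}(\phi_{s,a})\bigr]$
and invoke the elementary variance lower bound
\[
\mathrm{Cov}_{a \sim \pi_t(\cdot|s)}(\phi_{s,a}) \;\succeq\; p^2 |\cA|^2 \, \mathrm{Cov}_{a \sim \mathrm{Unif}(\cA)}(\phi_{s,a}),
\]
which holds whenever all $\pi_t(a|s) \geq p$, via the pairwise-difference identity $\mathrm{Cov}(X) = \tfrac{1}{2}\sum_{a,a'} \pi_t(a|s)\pi_t(a'|s)(\phi_{s,a} - \phi_{s,a'})(\phi_{s,a} - \phi_{s,a'})^\top$ and $\pi_t(a|s)\pi_t(a'|s) \geq p^2$. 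Combined with $d_\mu^{\pi_t}(s) \geq (1-\gamma)\mu(s)$ and restricting the outer expectation to $\mathrm{supp}(\mu)$, this gives $\sigma_1\bigl(G_\lambda^{\pi_t}(\mu)\bigr) \geq (1-\gamma) p^2 |\cA|^2 \sigma_1(F(\mu)) =: \sigma$, which is strictly positive by Assumption~\ref{assumption:basis}.

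For the inductive step, I would rewrite the NPG update as $\pi_{t+1}(a|s) = \pi_t(a|s) \exp(\eta w_t^\top \phi_{s,a})/Z_t(s)$ and combine it with the compatible function approximation identity $(\phi_{s,a} - \bar\phi_s^t)^\top w_t = Q_\lambda^{\pi_t}(s,a) - \lambda \log \pi_t(a|s) + \epsilon_{s,a}^t$ (with $\bar\phi_s^t = \sum_{a'}\pi_t(a'|s)\phi_{s,a'}$) to obtain the key recursion
\[
\log \pi_{t+1}(a|s) = (1 - \eta\lambda)\log \pi_t(a|s) + \eta Q_\lambda^{\pi_t}(s,a) + \eta \epsilon_{s,a}^t + \tilde Z_t(s),
\]
where $\tilde Z_t(s)$ is the state-dependent normalizer. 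With $\eta\lambda \leq 1/2$ the prefactor $1-\eta\lambda$ lies in $[1/2, 1)$ and acts as a contraction on $\log\pi_t$; combined with $|Q_\lambda^{\pi_t}| \leq (r_{max} + \lambda\log|\cA|)/(1-\gamma)$ from~\eqref{eqn:val-bounds} and the norm bound $\|w_t\| \leq (1-\gamma)\|\nabla V_\lambda^{\pi_t}(\mu)\|/\sigma \leq M/\sigma$ (derived from Lemma~\ref{lemma:cfa}, $\|\phi_{s,a}\| \leq 1$, the $Q$-bound, and the entropy bound $\bE_{a\sim\pi_t(\cdot|s)}[-\log\pi_t(a|s)] \leq \log|\cA|$), the two conditions on $\eta$ together force $\pi_{t+1}(a|s) \geq p$ to propagate for a suitably chosen $p$.

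The main obstacle is that the approximation error $\epsilon_{s,a}^t$ is only controlled in mean square by Assumption~\ref{assumption:approx}, whereas the inductive step needs a pointwise handle. I would close this with the self-consistent pointwise bound $|\epsilon_{s,a}^t| \leq 2\|w_t\| + |Q_\lambda^{\pi_t}(s,a)| + \lambda|\log \pi_t(a|s)|$ combined with $\|w_t\| \leq M/\sigma$ and the inductive hypothesis $\log \pi_t(a|s) \geq \log p$; the first step-size condition, whose $\sigma^2 r_{min}$ factor is calibrated precisely to balance the worst-case one-step perturbation against the contraction factor $1-\eta\lambda$ from the second condition, is what forces the induction to close with a uniform positive $p$ depending only on the fixed problem parameters $\gamma, \lambda, r_{min}, r_{max}, |\cA|, \sigma_1(F(\mu))$, and $\ea$.
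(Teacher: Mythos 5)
Your overall architecture is genuinely different from the paper's: you run a pointwise induction on $\min_{s,a}\log\pi_t(a|s)$ via the soft-policy-iteration recursion $\log \pi_{t+1}(a|s) = (1-\eta\lambda)\log\pi_t(a|s) + \eta Q_\lambda^{\pi_t}(s,a) + \eta\epsilon^t_{s,a} + \tilde Z_t(s)$, whereas the paper tracks the Lyapunov potential $\Phi(\pi_t)=\sum_s d_\mu^{\pi^*}(s)D_{KL}(\pi^*(\cdot|s)\|\pi_t(\cdot|s))$, shows it contracts using the drift inequality from the proof of Theorem~\ref{thm:main} (where the approximation error is handled \emph{in expectation} via Cauchy--Schwarz and the concentrability coefficient $C^\star$), and then converts the resulting bound $\Phi(\pi_t)\le\log|\cA|+\sqrt{C^\star\ea}/\lambda$ into the pointwise bound $\pi_t(a|s)\ge p$ through Claim~\ref{claim:potential}; the circular dependence between $p$ and $\sigma$ is closed by a stopping-time contradiction rather than by induction. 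Your quantitative version of Step~1, $\sigma_1(G_\lambda^{\pi_t}(\mu))\ge(1-\gamma)p^2|\cA|^2\sigma_1(F(\mu))$, is correct and in fact more explicit than the paper's Claim~\ref{claim:non-singularity}.

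The gap sits exactly where you flag the main obstacle, and your proposed fix does not close it. The bound $|\epsilon^t_{s,a}|\le 2\|w_t\|+|Q_\lambda^{\pi_t}(s,a)|+\lambda|\log\pi_t(a|s)|$ is just the triangle inequality applied to the definition of $\epsilon^t_{s,a}$; it makes no use of Assumption~\ref{assumption:approx}. Worse, the term $\lambda|\log\pi_t(a|s)| = -\lambda\log\pi_t(a|s)$ exactly cancels the contraction: substituting it into the recursion restores the coefficient of $\log\pi_t(a|s)$ to $(1-\eta\lambda)+\eta\lambda=1$, so you recover only the trivial one-step estimate $\pi_{t+1}(a|s)\ge\pi_t(a|s)\,e^{-2\eta\|w_t\|-O(\eta)}$, whose iterates decay geometrically in $t$ and give no uniform $p>0$. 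Equivalently, the fixed point of the affine recursion $x\mapsto(1-\eta\lambda)x-\eta B$ with $B\ge\lambda\log(1/p)$ is $-B/\lambda\le\log p-(\text{positive})/\lambda<\log p$, so the inductive hypothesis cannot propagate. The honest pointwise consequence of the mean-square assumption, $|\epsilon^t_{s,a}|\le\sqrt{\ea/\big((1-\gamma)\mu(s)\pi_t(a|s)\big)}$ for $s\in supp(\mu)$, does preserve the contraction but contributes $O(\sqrt{\ea/p})$ to the perturbation, and the resulting self-consistency requirement $\log(1/p)\gtrsim\log|\cA|+\sqrt{\ea/p}/\lambda$ has no solution $p\in(0,1/|\cA|]$ whenever $\ea>0$. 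This is precisely why the paper never attempts pointwise control of the error: in the potential-function route the error enters only through $\bE_{(s,a)\sim d_\mu^{\pi^*}\otimes\pi^*}\big[|\epsilon^t_{s,a}|\big]\le\sqrt{C^\star\ea}$, which is the step your induction cannot reproduce. To repair your argument you would need to replace the pointwise quantity $\min_{s,a}\log\pi_t(a|s)$ by an averaged one --- which leads you back to the paper's $\Phi$.
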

Lemma \ref{lemma:prob-bounds} implies $G^{\pi_t}(\mu)$ is strictly positive definite for all $t\geq 0$ because of entropy regularization.

By using the results of Lemma \ref{lemma:prob-bounds}, we have the following result on the linear convergence of entropy-regularized NPG.
\begin{theorem}[Convergence of entropy-regularized NPG]
Under Assumptions \ref{assumption:dist-mismatch}-\ref{assumption:basis}, the entropy-regularized NPG with the constant step-size in \eqref{eqn:step-size} satisfies the following:
\begin{equation*}
    \Phi(\pi_T) \leq (1-\eta\lambda)^T\log|\cA| + \frac{\sqrt{C^\star\cdot \ea}}{\lambda},
\end{equation*}
    and
\begin{align}
    \min_{0\leq t< T}\Delta_t &\leq \frac{\lambda\rho^T\log|\cA|}{(1-\gamma)\cdot\big(1-\rho^T\big)}+ \frac{\sqrt{C^\star \ea}}{1-\gamma}, \label{eqn:lc-best}\\
     \Delta_T &\leq\frac{ \big(1-\eta\lambda\big)^T\log|\cA|}{\eta(1-\gamma)} + \frac{\sqrt{C^\star\cdot\ea }}{\lambda\eta(1-\gamma)}\label{eqn:lc-last},
\end{align}
for any $T > 0$, where $\Delta_t = \cV_\lambda^{\pi^*}(\mu)-\cV_\lambda^{\pi_t}(\mu)$, $\rho = 1-\eta\lambda$, and $$\ea = \sup_{t \geq 0}\min_{w\in\bR^d}L(w,\theta_t),$$ for the loss function $L$ defined in \eqref{eqn:loss}.
\label{thm:main}
\end{theorem}
Note that the error term $\ea$ is the compatible function approximation error in \eqref{eqn:cfa}, which has different characteristics than the function approximation error $\epsilon(R)$ in \eqref{eqn:approx-error}. The proof of Theorem \ref{thm:main} is given in Appendix
\ref{app:conv-analysis}.
\begin{remark}[Last iterate convergence]\normalfont
In Theorem \ref{thm:main}, we provide convergence bounds for the last iterate in \eqref{eqn:lc-best} in addition to the best iterate in \eqref{eqn:lc-last}.
\end{remark}
\begin{remark}\normalfont 

    Note that $\lambda\downarrow 0$ implies $O(1/T)$ convergence rate up to the function approximation error $\frac{\sqrt{C^\star \ea}}{1-\gamma}$ asymptotically by \eqref{eqn:lc-best} in Theorem \ref{thm:main}, which implies $O\left(1/T\right)$ convergence rate in the unregularized MDP. This convergence rate matches the convergence rate for the unregularized MDP in \cite{xu2020improving}, { and is faster than the convergence rate $O(1/\sqrt{T})$ in \cite{agarwal2020optimality}} under similar concentrability conditions.
\end{remark}

\begin{corollary}
The following bound is satisfied under Assumptions \ref{assumption:dist-mismatch}-\ref{assumption:basis}:
\begin{equation}
    \min_{0\leq t< T}\Delta_t \leq \frac{ \big(1-\eta\lambda\big)^T\log|\cA|}{\eta(1-\gamma)}+ \frac{\sqrt{C^\star \ea}}{1-\gamma},
    \end{equation}
    for any $T \geq 1$.
\end{corollary}
\begin{proof}
The proof follows by substituting the bound $\frac{\lambda}{1-\rho^T}\leq \frac{\lambda}{1-\rho}\leq \frac{1}{\eta}$ into \eqref{eqn:lc-best}.
\end{proof}

\section{Sample-Based NPG with Entropy Regularization}\label{sec:nac}
The convergence results in Section \ref{sec:main-results} are based on the exact knowledge of $w_t$ at each iteration to understand the dynamics of the entropy-regularized NPG methods in terms of iteration complexity and function approximation error. In practice, $w_t$ should be estimated by solving \eqref{eqn:cfa} and \eqref{eqn:gradient-q-npg} using samples, which introduces statistical errors. In this section, we characterize the impact of statistical errors on the convergence of NPG methods.

Let $\mathcal{F}_t$ be the sigma-field generated by all samples used until (excluding) iteration $t$.

\textbf{Critic:} Consider the following Bellman operator:
\begin{equation}
    \mathcal{T}^\pi q(s, a) = r(s, a)-\lambda\log\pi(a|s) + \gamma \bE_{s^\prime a^\prime}q(s^\prime,a^\prime),
\end{equation}
for any $q:\cS\times \cA\rightarrow \bR$, where the expectation is over $s^\prime \sim P(\cdot|s,a)$ and $a^\prime\sim\pi(\cdot|s)$. Note that the shifted Q-function, $q_\lambda^\pi$, is the fixed point of the Bellman equation: $$q_\lambda^\pi(s,a) = \mathcal{T}^\pi q_\lambda^\pi(s,a),$$ whereas $\cQ_\lambda^\pi$ does not directly satisfy it. Therefore, we estimate $q_\lambda^\pi$,  which is the fixed point of the Bellman equation, and then use the relation $$\cQ_\lambda^\pi(s,a) = q_\lambda^\pi(s,a) + \lambda\log\pi(a|s),$$ to obtain a sample-based estimate for $\cQ_\lambda^\pi$. In order to find the fixed point of the Bellman equation, temporal difference learning with function approximation provides an effective method in large state-action spaces \cite{tsitsiklis1997analysis, sutton1988learning, bhandari2018finite}. We assume that the critic provides an estimate $\widehat{q}_\lambda^{\pi_t}$ for $q_\lambda^{\pi_t}$ such that:
\begin{equation}
    \bE\left[\left(q_\lambda^{\pi_t}(s,a)-\widehat{q}_\lambda^{\pi_t}(s,a)\right)^2\Big|\mathcal{F}_t\right] \leq \epsilon_{critic},
    \label{eqn:stat-error-1}
\end{equation}
which implies $\bE[(\cQ_\lambda^{\pi_t}(s,a)-\widehat{Q}_\lambda^{\pi_t}(s,a))^2|\mathcal{F}_t]\leq \epsilon_{critic}$ where $\widehat{Q}_\lambda^{\pi_t}(s,a) = \widehat{q}_\lambda^{\pi_t}(s,a)+\lambda\log\pi_t(a|s)$.

\textbf{Actor:} Given $$\widehat{\Xi}_\lambda^{\pi_t}(s,a) = \widehat{Q}_{\lambda}^{\pi_t}(s,a)-\bE_{a^\prime}\widehat{Q}_{\lambda}^{\pi_t}(s,a^\prime),$$ one needs to solve \eqref{eqn:cfa} (or \eqref{eqn:gradient-q-npg}) to find the NPG update $w_t$ by using samples $(s,a)\sim d_\mu^{\pi_t}\circ\pi_t$. This can be accomplished by stochastic gradient descent \cite{agarwal2020optimality}, or random-design least squares approach \cite{hsu2012random}. We assume that the actor update algorithm provides a gradient update $\widehat{w}_t$ which satisfies the following:
\begin{equation}
    \bE[(\widehat{w}_t^\top \log\pi_t(a|s)-\widehat{\Xi}_\lambda^{\pi_t}(s,a))^2|\mathcal{F}_t] \leq \epsilon_{actor} + \min_w \bE[(w^\top \log\pi_t(a|s)-\widehat{\Xi}_\lambda^{\pi_t}(s,a))^2|\mathcal{F}_t].
    \label{eqn:stat-error-2}
\end{equation}
at each iteration $t\leq T$.

For the black-box actor and critic algorithms that lead to the statistical errors in \eqref{eqn:stat-error-1} and \eqref{eqn:stat-error-2}, the sample-based natural policy gradient method based on the entropy-regularized NPG with averaging yields the following result.
\begin{proposition}
    Under Assumption \ref{assumption:conc-coeff}, the natural actor-critic algorithm with temperature $\lambda > 0$ yields the following bound:
    \begin{equation*}
    \bE[\min_{0\leq t < T}\Delta_t] \leq \frac{4\sqrt{\frac{2}{p_{min}}\|d_\mu^{\pi^*}/\mu\|_\infty\epsilon_{total}(R)}}{(1-\gamma)^{3/2}}+\frac{2R^2\log(T)}{(1-\gamma)\lambda T},
\end{equation*}
where $$\epsilon_{total}(R) \leq \epsilon(R) + \epsilon_{actor} + \epsilon_{critic},$$
for the statistical error terms $\epsilon_{critic}$ and $\epsilon_{actor}$ in \eqref{eqn:stat-error-1} and \eqref{eqn:stat-error-2}, respectively.
\label{prop:sample-based-npg}
\end{proposition}

In {the following subsection}, we explicitly characterize the sample complexity of an actor-critic method based on the entropy-regularized NPG with averaging, which uses temporal difference learning with linear function approximation for the critic, and stochastic gradient descent.

\subsection{Sample-Based Entropy-Regularized Natural Actor-Critic}
In the following, we consider a natural actor-critic algorithm for sample-based policy optimization based on the entropy-regularized NPG with averaging, TD learning with linear function approximation \cite{bhandari2018finite} and stochastic gradient descent (SGD) \cite{shalev2014understanding}.

\textbf{Critic:} We use temporal difference (TD) learning with linear function approximation for the critic \cite{sutton1988learning, tsitsiklis1997analysis, bhandari2018finite, wang2019neural}.

Consider the following Bellman operator:
\begin{equation}
    \mathcal{T}^\pi q(s, a) = r(s, a)-\lambda\log\pi(a|s) + \gamma \bE_{s^\prime a^\prime}q(s^\prime,a^\prime),
\end{equation}
for any $q:\cS\times \cA\rightarrow \bR$, where the expectation is over $s^\prime \sim P(\cdot|s,a)$ and $a^\prime\sim\pi(\cdot|s)$. Note that the shifted Q-function, $q_\lambda^\pi$, is the fixed point of the Bellman equation: $$q_\lambda^\pi(s,a) = \mathcal{T}^\pi q_\lambda^\pi(s,a),$$ whereas $\cQ_\lambda^\pi$ does not directly satisfy it. Therefore, for the actor-critic method, we estimate $q_\lambda^\pi$ by using TD learning with linear function approximation, and then use the relation $$\cQ_\lambda^\pi(s,a) = q_\lambda^\pi(s,a) + \lambda\log\pi(a|s),$$ to obtain a sample-based estimate for $\cQ_\lambda^\pi$.

Let $\{\psi_{s,a}\in\bR^d:(s,a)\in\cS\times\cA\}$ be the set of basis vectors for TD learning with linear function approximation. The goal in temporal difference learning is to minimize the mean-squared projected Bellman error \cite{bhandari2018finite, sutton1988learning}:
\begin{equation*}
    \beta_t^\star = \arg\min_{\beta \in\mathcal{B}(0,R)}\bE_{(s,a)\sim\nu_t}[(\mathcal{T}^{\pi_t}(\beta^\top\psi_{s,a})-(\beta^\top\psi_{s,a}))^2].
\end{equation*}
where $\mathcal{B}(0,R) = \{x\in\bR^d:\|x\|_2\leq R\}$ for a given projection radius $R > 0$.

Starting with $\beta_{0,t} = 0$, the TD learning iterates as follows \cite{bhandari2018finite, tsitsiklis1997analysis, sutton1988learning}:
\begin{align*}
    \beta_{k+\frac{1}{2},t} &= \beta_{k,t} + \alpha_k\Big(r_k^\lambda + \gamma \beta_{k,t}^\top \psi_{s_k^\prime,a_k^\prime}-\beta_{k,t}^\top\psi_{s_k,a_k}\Big)\psi_{s_k,a_k},\\
    \beta_{k+1,t} &= \mathcal{P}_{\mathcal{B}(0,R)}\big\{\beta_{k+\frac{1}{2},t}\big\},
\end{align*}
where $(s_k,a_k)\sim \nu_{\pi_t}$, $s_{k}^\prime\sim P(\cdot|s_k,a_k)$, $a_k^\prime\sim \pi_t(\cdot|s_k^\prime)$, $r_k^\lambda = r(s_k,a_k)-\lambda\log\pi_t(a_k|s_k)$, and $\mathcal{P}_\mathcal{C}$ is the projection operator onto $\mathcal{C}\subset \bR^d$. Then, the output of the TD learning is the following: $$\widehat{Q}_{\lambda,K}^{\pi_t}(s,a) = \frac{1}{K}\sum_{k \leq K}\beta_{k,t}^\top\psi_{s,a} + \lambda\log\pi_t(a|s) = \widehat{q}_{\lambda}^{\pi_t}(s,a)+ \lambda\log\pi_t(a|s),$$ for $K > 0$.

In order to characterize the sample complexity to achieve a target error $\epsilon_{critic}$, we make the following assumptions for TD learning.
\begin{assumption}
Assume the Markov chain $\{(s_k,a_k):s_{k+1}\sim P(\cdot|s_k,a_k),a_k\sim\pi_t(\cdot|s_k),k \geq 0, s_0\sim\mu\}$ is irreducible and aperiodic {with stationary distribution $\nu_{\pi_t}$} for all $t\geq 0$. Also, for all $t \geq 0$, $\nu_{\pi_t} \ll {d_\mu^{\pi_t}\otimes\pi_t}$ with a Radon-Nikodym derivative upper bounded by a constant $M^*_{st}$ where $\nu_\pi$ is the stationary state distribution under a policy $\pi$.
\label{assumption:ergodicity}
\end{assumption}
Assumption \ref{assumption:ergodicity} implies that the Markov chain under $\pi_t$ is ergodic, therefore has a stationary distribution $\nu_{\pi_t}$. For simplicity, we assume that i.i.d. samples $(s_k,a_k)$ from $\nu_{\pi_t}$ can be obtained. The second part of the assumption, i.e., $\nu_{\pi_t} \ll { d_\mu^{\pi_t}\otimes\pi_t}$, { implies that $\frac{d_\mu^{\pi_t}(s)\pi_t(a|s)}{\nu_{\pi_t}(s,a)}\leq M_{st}^*$ for any $t\geq 0$ and $(s,a)\in\mathbb{S}\times\mathbb{A}$ such that $d_\mu^{\pi_t}(s)\pi_t(a|s) > 0$.}

For simplicity, we make the following realizability assumption. Note that without this assumption, there will be an additional function approximation error since $q_\lambda^\pit$ may not be in the function class determined by $\{\psi_{s,a}:(s,a)\in\cS\times\cA\}$, which can be easily incorporated into the bound.
\begin{assumption}[Realizability]
    For any $t \geq 0$, there exists $\overline{\beta}_t\in\bR^d$ such that $\|\overline{\beta}_t\|_2 \leq \overline{R}$, and $$q_\lambda^{\pi_t}(s,a) = \overline{\beta}_t^\top\psi_{s,a},~\forall (s,a)\in\cS\times\cA,$$  for some $\overline{R} < \infty$.
    \label{assumption:realizability}
\end{assumption}

The performance of the critic is characterized by the following finite-time bounds for TD learning with linear function approximation in \cite{bhandari2018finite}.
\begin{proposition}[Theorems 2-3 in \cite{bhandari2018finite}]
    Under Assumptions \ref{assumption:ergodicity}-\ref{assumption:realizability}, 
    \begin{itemize}
        \item with constant step-size $\alpha_k = 1/\sqrt{K}$,
        \begin{equation}
            \bE[\|q_\lambda^{\pi_t}-\widehat{q}_\lambda^{\pi_t}\|_{\nu_{\pi_t}}^2|\mathcal{F}_t]\leq O\left((R-\lambda\log p_{min})^2\right)\frac{1}{(1-\gamma)\sqrt{K}},
        \end{equation}
        \item  with decaying step-size $\alpha_k = \frac{1}{\omega(k+1)(1-\gamma)}$, we have:
    \begin{equation}
        \bE[\|q_\lambda^{\pi_t}-\widehat{q}_\lambda^{\pi_t}\|_{\nu_{\pi_t}}^2|\mathcal{F}_t] \leq  O\left((R-\lambda\log p_{min})^2\right)\frac{1+\log K}{(1-\gamma)^2K\omega},
    \end{equation}
    \end{itemize}
    where $R > \overline{R}$, and $\omega$ is the minimum eigenvalue of $\sum_{s,a}\nu_{\pi_t}(s,a)\psi_{s,a}\psi_{s,a}^\top$.
    \label{prop:critic}
\end{proposition}
Hence, we have $$\epsilon_{critic} \leq M^*_{st}\cdot \bE[\|q_\lambda^{\pi_t}-\widehat{q}_\lambda^{\pi_t}\|_{\nu_{\pi_t}}^2|\mathcal{F}_t],$$ for all $t$, where $\bE[\|q_\lambda^{\pi_t}-\widehat{q}_\lambda^{\pi_t}\|_{\nu_{\pi_t}\pi_t}^2|\mathcal{F}_t]$ is characterized in Prop. \ref{prop:critic}. The factor $M^*_{st}$ comes from a change of measure argument under Assumption \ref{assumption:ergodicity}. Under regularity conditions for the features $\{\psi_{s,a}:(s,a)\in\cS\times\cA\}$, Proposition \ref{prop:critic} implies that TD learning with decaying step-size requires $K = \tilde{O}(1/\epsilon^2)$ samples at each iteration $t\leq T$ to achieve $\epsilon_{critic} = \epsilon^2$. Without any assumptions on the feature matrix, $K = O(1/\epsilon^4)$ samples at each iteration $t\leq T$ are required to achieve the same critic error $\epsilon_{critic}$ with a constant step-size.

\textbf{Actor update:} Let $$\widehat{\Xi}_\lambda^{\pi_t}(s,a) = \widehat{Q}_\lambda^{\pi_t}(s,a)-\bE_{a^\prime\sim\pi_t(\cdot|s)}\widehat{Q}_\lambda^{\pi_t}(s,a^\prime).$$ Then, starting from $\bar{w}_{0,t} = 0$, the following stochastic gradient descent (SGD) iterations are followed for $n<N$:
\begin{align*}
    \bar{w}_{n+\frac{1}{2},t} &= \bar{w}_{n,t} - \alpha^\prime\big(\nabla^\top\log\pi_t(a_n|s_n)\bar{w}_{n,t}-\widehat{\Xi}_\lambda^{\pi_t}(s_n,a_n)\big)\nabla\log\pi_t(a_n|s_n),\\
    \bar{w}_{n+1,t} &= \mathcal{P}_{\mathcal{B}(0,R)}\{\bar{w}_{n+\frac{1}{2},t}\},
\end{align*}
where $(s_n,a_n)\sim d_\mu^{\pi_t}\circ\pi_t$. Sampling from the state-action visitation distribution can be performed by using the sampler in \cite{agarwal2020optimality, konda2000actor}. Note that the estimates $\widehat{\Xi}_\lambda^{\pi_t}(s_n,a_n)$ are obtained by the critic here, unlike the unbiased sampling procedure for the unregularized Q-function in \cite{agarwal2020optimality}, which typically yields lower variance \cite{bhatnagar2009natural}.

\begin{proposition}[Theorem 14.8 in \cite{shalev2014understanding}] The above SGD iterations with the constant step-size $\alpha^\prime = R /\sqrt{ q_{max} N}$ yield the following result:
    \begin{equation}
        \epsilon_{actor} \leq \frac{R q_{max}}{\sqrt{N}},
    \end{equation}
    for any $R > \overline{R}$ where the expectation is over the random samples $\{(s_n,a_n):n\in[N]\}$, and $q_{max} = r_{max} + \frac{r_{max}+\lambda\log|\cA|}{1-\gamma}$.
    \label{prop:sgd}
    \end{proposition}

Hence, the entropy-regularized NAC performs the actor update as follows:
\begin{equation}
    \theta_{t+1} = (1-\eta_t\lambda)\theta_t + \frac{\eta_t}{N}\sum_{n=1}^N\bar{w}_{n,t}.
\end{equation}

\begin{remark}[Sample complexity of entropy-regularized NPG]
    {As a direct consequence of Propositions \ref{prop:sample-based-npg}, \ref{prop:critic} and \ref{prop:sgd}, the overall sample complexity of the sample-based natural actor-critic with TD learning and stochastic gradient descent is $O(1/\epsilon^5)$. We note that, under full-rank assumptions on the feature matrices formed by the feature vectors $[\psi_{s,a}]_{(s,a)\in\cS\times\cA}$ in the critic and $[\nabla\log\pi_t(a|s)]_{(s,a)\in\cS\times\cA}$ in the actor, the convergence rates of the actor and critic steps can be improved to $\tilde{O}(1/N)$ and $\tilde{O}(1/K)$ with diminishing step-sizes, respectively \cite{bach2021learning, bhandari2018finite}, which would imply an overall sample complexity of $\tilde{O}(1/\epsilon^3)$ for the sample-based natural actor-critic by Proposition \ref{prop:sample-based-npg}.}
\end{remark}
\section{Conclusion and Future Work}
In this work, we analyzed the convergence of natural policy gradient under softmax parameterization with linear function approximation, and established sharp finite-time convergence bounds. In particular, we proved that entropy-regularized NPG with linear function approximation achieves $\tilde{O}(1/T)$ convergence rate under only a mild distribution mismatch assumption, and achieves linear convergence rate under regularity assumptions on the basis vectors, which is significantly faster than the sublinear rates previously obtained in the function approximation setting. Based on a Lyapunov drift analysis, we proved that entropy regularization encourages exploration so that all actions are explored with a probability bounded away from zero, which explains the empirical success of entropy regularization in NPG methods with function approximation.

An immediate future work is to use the techniques that we established in this paper to improve sample complexity and overparameterization bounds for sample-based NPG with neural network approximation in the NTK regime, and study the role of entropy regularization in that setting. Another interesting future direction is the study of (vanilla) PG methods with entropy regularization in the function approximation regime.

\appendix
\section{Omitted Proofs}\label{app:preliminaries}

\subsection{Proof of Proposition \ref{thm:policy-gradient}}
We have:
\begin{equation}
    \cV_\lambda^\pit(s_0) = \sum_{a\in\cA}\pit(a|s_0)\Big(\cQ_\lambda^\pit(s_0,a)-\lambda\log\pit(a|s_0)\Big),
\end{equation}
for any $s_0\in\cS$. Taking the gradient of the above identity,
\begin{multline*}
    \nabla_\theta \cV_\lambda^\pit(s_0) = \sum_a \nabla_\theta \pit(a|s_0)\Big(\cQ_\lambda^\pit(s_0,a)-\lambda\log\pit(a|s_0)\Big)\\+\sum_a\pit(a|s_0)\Big(\nabla_\theta \cQ_\lambda^\pit(s_0,a)-\lambda\frac{\nabla_\theta\pit(a|s_0)}{\pit(a|s_0)}\Big),
\end{multline*}
since $\|\theta\|_2<\infty$ and $\pit(a|s) > 0$ for all $s,a$, $\nabla_\theta \pit(a|s) = \nabla_\theta\log\pit(a|s)\pit(a|s)$. First, note that $$\sum_a\nabla_\theta \pit(a|s) = \nabla_\theta \sum_a \pit(a|s) = 0,$$ for any $s,a$. Therefore,
\begin{equation}
    \nabla_\theta \cV_\lambda^\pit(s_0) = \sum_a \nabla_\theta \pit(a|s_0)\Big(\cQ_\lambda^\pit(s_0,a)-\lambda\log\pit(a|s_0)\Big) + \sum_a\pit(a|s_0)\nabla_\theta \cQ_\lambda^\pit(s_0,a).
    \label{eqn:grad-a}
\end{equation}
Recall that:
\begin{equation*}
    \cQ_\lambda^\pit(s, a) = r(s,a) + \sum_{s^\prime\in\cS}\cP(s^\prime|s,a)\cV_\lambda^\pit(s^\prime).
\end{equation*}
Thus, the gradient of $\cQ_\lambda^\pit(s,a)$ is as follows:
\begin{equation*}
    \nabla_\theta \cQ_\lambda^\pit(s,a) = \sum_{s^\prime\in\cS}\cP(s^\prime|s,a)\nabla_\theta \cV_\lambda^\pit(s^\prime).
\end{equation*}
Substituting this into \eqref{eqn:grad-a}, we obtain:
\begin{multline*}
    \nabla_\theta \cV_\lambda^\pit(s_0) = \sum_a \nabla_\theta \pit(a|s_0)\Big(\cQ_\lambda^\pit(s_0,a)-\lambda\log\pit(a|s_0)\Big)\\+\gamma\sum_s \bP_{\pit}(s_1=s|s_0)\nabla_\theta \cV_\lambda^\pit(s).
\end{multline*}
By induction,
\begin{align*}
    \nabla_\theta \cV_\lambda^\pit(s_0) &= \sum_{s,a}\sum_{t=0}^\infty \gamma^t \bP_\pit(s_t=s|s_0)\nabla_\theta \pit(a|s)\Big(\cQ_\lambda^\pit(s,a)-\lambda\log\pit(a|s)\Big),\\
    &= \frac{1}{1-\gamma}\sum_{s,a}d_{s_0}^\pit(s)\nabla_\theta\pit(a|s)\Big(\cQ_\lambda^\pit(s,a)-\lambda\log\pit(a|s)\Big),
\end{align*}
by the definition of $d_{s_0}^\pit(s)$. By taking expectation of the above identity over the initial state $s_0\sim\mu$ and using $\nabla_\theta \pit(a|s) = \nabla_\theta\log\pit(a|s)\pit(a|s)$, we conclude the proof.

\subsection{Proof of Lemma \ref{lemma:cfa}}
We have:
\begin{equation*}
    L(w, \theta) = \bE_{s\sim d_\mu^\pit,a\sim\pit(\cdot|s)}\Big[\Big(\nabla_\theta^\top \log\pit(a|s) w - \big(\cQ_\lambda^\pit(s, a)-\lambda\log\pit(a|s)\big)\Big)^2\Big],
\end{equation*}
for any given $w,\theta\in\bR^d$. The first-order optimality condition yields:
\begin{align*}
    \nabla_w L(w,\theta)\big|_{w=w_t} &= \bE_{s\sim d_\mu^\pit,a\sim\pit(\cdot|s)}\Big[\nabla_\theta\log\pit(a|s)\Big(\nabla_\theta^\top \log\pit(a|s) w_t - q_\lambda^\pit(s, a)\Big)\Big],\\
    &= G^\pit(\mu)w_t - (1-\gamma)\nabla_\theta \cV_\lambda^\pit(\mu) = 0,
\end{align*}
by the definition of $G^\pit(\mu)$ and Proposition \ref{thm:policy-gradient}. The result directly follows from the first-order optimality condition.

\section{Convergence Analysis of Entropy-Regularized NPG}\label{app:conv-analysis}
In this section, we will prove Theorem \ref{thm:q-npg} and Theorem \ref{thm:main}. First, we prove the Lyapunov drift lemma, which will be central in both proofs.
\subsection{Proof of Lemma \ref{lemma:lyapunov}}
The following lemmas will be useful in the proof.
\begin{lemma}[Performance difference lemma]
 For any $\theta,\theta^\prime\in\bR^d$ and $\mu\in\Delta(\cS)$, we have:
 \begin{equation}
     \cV_\lambda^{\pi_\theta}(\mu)-\cV_\lambda^{\pi_{\theta^\prime}}(\mu) = \frac{1}{1-\gamma}\bE_{s\sim d_\mu^{\pi_\theta},a\sim\pi_\theta(\cdot|s)}\Big[A_\lambda^{\pi_{\theta^\prime}}(s,a)+\lambda\log\frac{\pi_{\theta^\prime}(a|s)}{\pi_\theta(a|s)}\Big],
 \end{equation}
 where $A_\lambda^{\pi_\theta}$ is the (soft) advantage function:
\begin{equation}
    A_\lambda^\pi(s, a) = \cQ_\lambda^\pi(s, a) - \cV_\lambda^\pi(s) - \lambda\log\pi(a|s).
    \label{eqn:adv}
\end{equation}
\label{lemma:pdl}
\end{lemma}
\begin{proof}
For any $s_0\in\cS$, we have:
\begin{align*}
    \cV_\lambda^{\pit}(s_0)-\cV_\lambda^{\pitp}(s_0) &= \bE_\pit\Big[\sum_{t=0}^\infty\gamma^t\Big(r_t-\lambda\log\pit(a_t|s_t)\Big)\Big|s_0\Big]-\cV_\lambda^\pitp(s_0),\\
    &= \bE_\pit\Big[\sum_{t=0}^\infty\gamma^t\Big(r_t-\lambda\log\pit(a_t|s_t)+\cV_\lambda^\pitp(s_t)-\cV_\lambda^\pitp(s_t)\Big)\Big|s_0\Big]\\ &\hskip 8.12cm -\cV_\lambda^\pitp(s_0),\\
    &= \bE_\pit\Big[\sum_{t=0}^\infty\gamma^t\Big(r_t-\lambda\log\pit(a_t|s_t)+\gamma \cV_\lambda^\pitp(s_{t+1})-\cV_\lambda^\pitp(s_t)\Big)\Big|s_0\Big],
\end{align*}
where $r_t = r(s_t,a_t)$ and the last identity holds since $$\sum_{t=0}^\infty \gamma^t \cV_\lambda^\pitp(s_t) = \cV_\lambda^\pitp(s_0)+\gamma\sum_{t=0}^{\infty}\gamma^tV_\lambda^\pitp(s_{t+1}).$$ Then, letting $r_t = r(s_t,a_t)$ and by using law of iterated expectations,
\begin{multline*}
    \cV_\lambda^{\pit}(s_0)-\cV_\lambda^{\pitp}(s_0) \\ = \bE_\pit\Big[\sum_{t=0}^\infty\gamma^t\Big(\bE_{\pitp}[r_t+\gamma \cV_\lambda^\pitp(s_{t+1})|s_t,a_t]-\lambda\log\pit(a_t|s_t)-\cV_\lambda^\pitp(s_t)\Big)\Big|s_0\Big].
\end{multline*}
By definition, we have $\bE_\pitp[r_t+\gamma \cV_\lambda^\pitp(s_{t+1})|\mathcal{F}_t] = \cQ_\lambda^\pitp(s_t, a_t).$
Then,
\begin{align*}
    \cV_\lambda^{\pit}(s_0)-\cV_\lambda^{\pitp}(s_0) &= \bE_\pit\Big[\sum_{t=0}^\infty\gamma^t\Big(\cQ_\lambda^\pitp(s_t,a_t)-\lambda\log\pit(a_t|s_t)-\cV_\lambda^\pitp(s_t)\Big)\Big|s_0\Big],\\
    &= \bE_\pit\Big[\sum_{t=0}^\infty\gamma^t\Big(A_\lambda^\pitp(s_t,a_t)+\lambda\log\pitp(a_t|s_t)-\lambda\log\pit(a_t|s_t)\Big)\Big|s_0\Big],
\end{align*}
since $A_\lambda^\pitp(s,a) = \cQ_\lambda^\pitp(s,a)-\cV_\lambda^\pitp(s)-\lambda\log\pitp(a|s)$ for any $s,a$. Hence,
\begin{align*}
    \cV_\lambda^{\pit}(s_0)-\cV_\lambda^{\pitp}(s_0) &= \sum_{s, a}\sum_{t=0}^\infty \gamma^t\bP_\pitp(s_t=s|s_0)\pit(a|s)\Big(A_\lambda^\pitp(s,a)+\lambda\log\frac{\pitp(a|s)}{\pit(a|s)}\Big),\\
    &= \frac{1}{1-\gamma}\sum_{s,a}d_{s_0}^\pit(s)\pit(a|s)\Big(A_\lambda^\pitp(s,a)+\lambda\log\frac{\pitp(a|s)}{\pit(a|s)}\Big),
\end{align*}
which concludes the proof.
\end{proof}

The following lemma is a direct consequence of \eqref{eqn:grad-log}, and it was proposed in \cite{agarwal2020optimality}.
\begin{lemma}[Smoothness $\log\pi_\theta(a|s)$]
 For any $(s,a)\in\cS\times\cA$, $\log\pi_\theta(a|s)$ is smooth:
 
 \begin{equation}
     \|\nabla_\theta\log\pi_\theta(a|s)-\nabla_\theta\log\pi_{\theta^\prime}(a|s)\|_2 \leq \|\theta-\theta^\prime\|_2,
 \end{equation}
 
 for any $\theta,\theta^\prime\in\bR^d$.
 \label{lemma:smoothness}
\end{lemma}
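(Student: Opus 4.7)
The plan is to prove the Lipschitz bound by showing that the Hessian of $\log\pi_\theta(a|s)$ has operator norm bounded by $1$ uniformly in $\theta$, and then invoking the mean-value inequality along the segment from $\theta$ to $\theta'$.

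First, I would compute the Hessian explicitly. Writing $\log\pi_\theta(a|s) = \theta^\top\phi_{s,a} - \log\sum_{a'\in\cA}\exp(\theta^\top\phi_{s,a'})$ and differentiating the expression in \eqref{eqn:grad-log}, the term $\phi_{s,a}$ drops out, so the Hessian depends only on $s$. Let $\bar\phi(\theta)=\sum_{a'\in\cA}\pi_\theta(a'|s)\phi_{s,a'}$. Using $\nabla_\theta\pi_\theta(a'|s)=\pi_\theta(a'|s)\bigl(\phi_{s,a'}-\bar\phi(\theta)\bigr)$, a short calculation yields
\begin{equation*}
\nabla_\theta^2\log\pi_\theta(a|s) \;=\; -\,H_s(\theta), \qquad H_s(\theta) \;=\; \sum_{a'\in\cA}\pi_\theta(a'|s)\bigl(\phi_{s,a'}-\bar\phi(\theta)\bigr)\bigl(\phi_{s,a'}-\bar\phi(\theta)\bigr)^\top,
\end{equation*}
i.e.\ the covariance matrix of the random vector $\phi_{s,A}$ when $A\sim\pi_\theta(\cdot|s)$.

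Next I would bound the spectral norm of $H_s(\theta)$ by $1$. For any unit vector $v\in\bR^d$,
\begin{equation*}
v^\top H_s(\theta) v \;=\; \mathrm{Var}_{A\sim\pi_\theta(\cdot|s)}\bigl(\phi_{s,A}^\top v\bigr) \;\leq\; \bE_{A\sim\pi_\theta(\cdot|s)}\bigl[(\phi_{s,A}^\top v)^2\bigr] \;\leq\; \sup_{a'\in\cA}\|\phi_{s,a'}\|_2^2\,\|v\|_2^2 \;\leq\; 1,
\end{equation*}
using the assumption $\sup_{s,a}\|\phi_{s,a}\|_2\le 1$ from the definition of the policy class. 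Since $H_s(\theta)$ is symmetric positive semidefinite, this gives $\bigl\|\nabla_\theta^2\log\pi_\theta(a|s)\bigr\|_2 = \|H_s(\theta)\|_2\le 1$ for every $\theta\in\bR^d$.

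Finally I would conclude via the fundamental theorem of calculus applied componentwise to $\nabla_\theta\log\pi_\theta(a|s)$: for any $\theta,\theta'\in\bR^d$,
\begin{equation*}
\nabla_\theta\log\pi_\theta(a|s)-\nabla_\theta\log\pi_{\theta'}(a|s) \;=\; \int_0^1 \nabla_\theta^2\log\pi_{\theta'+t(\theta-\theta')}(a|s)\,(\theta-\theta')\,dt,
\end{equation*}
and taking norms together with the uniform bound $\|\nabla_\theta^2\log\pi_\theta(a|s)\|_2\le 1$ gives the claimed inequality. The only nontrivial step is the covariance identification of the Hessian and the elementary variance bound; both are short and use only the normalization $\|\phi_{s,a}\|_2\le 1$, so I do not anticipate any real obstacle.
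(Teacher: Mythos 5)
Your proof is correct and complete. The paper itself does not prove this lemma---it only remarks that it is "a direct consequence of \eqref{eqn:grad-log}" and defers to the cited reference---and your argument (identifying $\nabla_\theta^2\log\pi_\theta(a|s)$ as minus the covariance matrix of $\phi_{s,A}$ under $A\sim\pi_\theta(\cdot|s)$, bounding its spectral norm by $\sup_{s,a}\|\phi_{s,a}\|_2^2\le 1$, and integrating along the segment) is exactly the standard derivation that the citation relies on, so there is nothing to correct.
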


\begin{proof}[Proof of Lemma \ref{lemma:lyapunov}]
By using the definition of $\Phi$:
\begin{align}
    \Phi(\pi_{t+1})-\Phi(\pi_t) &= \sum_{s,a}d_\mu^{\pi^*}(s)\pi^*(a|s)\log\frac{\pi_t(a|s)}{\pi_{t+1}(a|s)},\\
    &\leq -\eta_t\sum_{s,a}\dopt \nabla_\theta^\top\log\pi_t(a|s)g_t + \frac{\eta^2\|g_t\|_2^2}{2},
\end{align}
where the inequality follows from the smoothness of $\log\pi_\theta(a|s)$ shown in Lemma \ref{lemma:smoothness} \cite{agarwal2020optimality}. Then, by adding and subtracting the advantage function $A_\lambda^{\pi_t}(s,a)$ into the sum on the RHS of the above inequality, and using Lemma \ref{lemma:pdl}, we obtain the result.
\end{proof}

\subsection{Convergence Analysis of Entropy-Regularized NPG with Averaging}\label{subsec:q-npg}
\begin{proof}[Proof of Theorem \ref{thm:q-npg}]
We can write the Lyapunov drift in Lemma \ref{lemma:lyapunov} as follows:
\begin{multline}
    \Phi_{t+1}-\Phi_{t} \leq -\eta_t\lambda\Phi_t - \eta_t(1-\gamma)\Delta_t \\-\eta_t\sum_{s,a}d_\mu^{\pi^*}(s)\big(\pi^*(a|s)-\pi_t(a|s)\big)\Big[\phi_{s,a}^\top g_t - \cQ_\lambda^{\pi_t}(s,a)+\lambda\theta_t^\top\phi_{s,a}\Big] + \frac{1}{2}\eta_t^2\|g_t\|_2^2,
\end{multline}
where $\Phi_t:=\Phi(\pi_t)$ and $\Delta_t = \cV_\lambda^{\pi^*}(\mu)-\cV_\lambda^{\pi_t}(\mu).$
By Lemma \ref{lemma:q-npg}, we have $\|g_t\|_2 \leq 2R,$ for all $t \geq 1$. Thus, since $g_t = w_t-\lambda\theta_t$, we have:
\begin{multline}
    \Phi_{t+1}-\Phi_{t} \leq -\eta_t\lambda\Phi_t - \eta_t(1-\gamma)\Delta_t \\+\eta_t\sum_{s,a}d_\mu^{\pi^*}(s)\big(\pi_t(a|s)-\pi^*(a|s)\big)\Big[\phi_{s,a}^\top w_t - \cQ_\lambda^{\pi_t}(s,a)\Big] + 2\eta_t^2R^2.
    \label{eqn:ly-drift-a}
\end{multline}
Adding the baseline $b(s) = \bE_{a^\prime\sim\pi_t(\cdot|s)}[-\phi_{s,a^\prime}^\top w_t + \cQ_\lambda^{\pi_t}(s,a^\prime)]$ to the third summand (for each $s\in\cS$) on the RHS of the above does not change the inequality: 
\begin{multline}
    \Phi_{t+1}-\Phi_{t} \leq -\eta_t\lambda\Phi_t - \eta_t(1-\gamma)\Delta_t \\+\eta_t\underbrace{\sum_{s,a}d_\mu^{\pi^*}(s)\big(\pi_t(a|s)-\pi^*(a|s)\big)\Big[\nabla\log\pi_t(a|s) w_t - \Xi_\lambda^{\pi_t}(s,a)\Big]}_{(\blacksquare)} + 2\eta_t^2R^2.
    \label{eqn:ly-drift-b}
\end{multline}
Note that Lemma \ref{lemma:q-npg} implies $\pi^*(\cdot|s)\ll\pi_t(\cdot|s)$ for every $s\in\cS$. Thus, 
\begin{align*}
    &\sum_{s,a}d_\mu^{\pi^*}(s)\big(\pi_t(a|s)-\pi^*(a|s)\big)\Big(\nabla\log\pi_t(a|s) w_t - \Xi_\lambda^{\pi_t}(s,a)\Big)\\
    &\overset{(\spadesuit)}{\leq} \sum_{s,a}d_\mu^{\pi^*}(s)\frac{|\pi_t(a|s)-\pi^*(a|s)|}{\pi_t(a|s)}\pi_t(a|s)\Big[\nabla\log\pi_t(a|s) w_t - \Xi_\lambda^{\pi_t}(s,a)\Big],\\
    &\overset{(\clubsuit)}{\leq} \sqrt{\sum_sd_\mu^{\pi^*}(s)\chi^2(\pi^*(\cdot|s)\|\pi_t(\cdot|s))}\cdot\sqrt{\sum_sd_\mu^{\pi^*}(s)\pi_t(a|s)|\nabla\log\pi_t(a|s) w_t - \Xi_\lambda^{\pi_t}(s,a)|^2},\\
    &\overset{(\diamondsuit)}{\leq} \sqrt{\sum_sd_\mu^{\pi^*}(s)\chi^2(\pi^*(\cdot|s)\|\pi_t(\cdot|s))} \cdot \sqrt{\frac{\epsilon(R)}{1-\gamma}\|d_\mu^{\pi^*}/\mu\|_\infty},
\end{align*}
where $(\spadesuit)$ follows from the triangle inequality, $(\clubsuit)$ follows from H\"older's inequality, and $(\diamondsuit)$ follows from Assumption \ref{assumption:conc-coeff}. By using Lemma \ref{lemma:q-npg},
\begin{align*}
    \chi^2(\pi^*(\cdot|s)\|\pi_t(\cdot|s))=\sum_{a}\frac{(\pi_t(a|s)-\pi^*(a|s))^2}{\pi_t(a|s)}&\leq \sum_a\frac{\pi_t^2(a|s)+(\pi^*(a|s))^2}{p_{min}},\\
    &\leq \sum_a\frac{\pi_t(a|s)+\pi^*(a|s)}{p_{min}}=\frac{2}{p_{\min}},
\end{align*}
for any $s\in\cS$. Substituting these inequalities into \eqref{eqn:ly-drift-b} and noting that $\eta_t=\frac{1}{\lambda(t+1)}$, we obtain
\begin{align*}
    \Phi_{t+1}\leq \frac{t}{t+1}\Phi_t-\eta_t(1-\gamma)\Delta_t+\eta_t\sqrt{\frac{2\epsilon(R)\|d_\mu^{\pi^*}/\mu\|_\infty}{(1-\gamma)p_{min}}}+2\eta_t^2R^2,
\end{align*}
for every $t\geq 0$.
By induction,
\begin{equation}
    \Phi_T \leq - \frac{(1-\gamma)}{\lambda T}\sum_{t<T}\Delta_t + \frac{1}{\lambda}\sqrt{\frac{2\epsilon(R)\|d_\mu^{\pi^*}/\mu\|_\infty}{(1-\gamma)p_{min}}} + \frac{2R^2(1+\log T)}{\lambda^2T}.
\end{equation}
Since $\min_{t<T}\Delta_t \leq \frac{1}{T}\sum_{t<T}\Delta_t$, the proof follows.
\end{proof}

\rev{
\begin{proof}[Proof of Prop. \ref{prop:donsker-varadhan}]
    We will use the Lyapunov drift inequality given in \eqref{eqn:ly-drift-a}. Let $\mathsf{err}_t(s,a):=|w^\top\phi_{s,a}-\cQ_\lambda^{\pi_t}(s,a)|$. Then, by the Donsker-Varadhan variational representation (Theorem 3.16, \cite{hellstrom2023generalization}), we have
    \begin{align*}
        \sum_{s,a} d_\mu^{\pi^*}(s)\pi_t(a|s)\mathsf{err}_t(s,a) &\leq \log M_t(w_t)+\mathcal{D}_{\mathsf{KL}}(d_\mu^{\pi^*}\circ\pi^*\|d_\mu^{\pi_t}\circ\pi_t),\\
        \sum_{s,a} d_\mu^{\pi^*}(s)\pi_t(a|s)\mathsf{err}_t(s,a) &\leq \log M_t(w_t)+\mathcal{D}_{\mathsf{KL}}(d_\mu^{\pi^*}\circ\pi_t\|d_\mu^{\pi_t}\circ\pi_t),
    \end{align*}
    where $M_t(w_t)$ is the MGF of the error $\mathsf{err}_t(s,a)$ under $d_\mu^{\pi_t}\circ\pi_t$ at $1$, which is always bounded by $\|{\mathsf{err}}_t(\cdot,\cdot)\|_{\infty}$, which is also bounded by $O(R+\lambda)$ almost surely. Furthermore, by Lemma \ref{lemma:q-npg},
    $$\mathcal{D}_{\mathsf{KL}}(d_\mu^{\pi^*}\circ\pi_t\|d_\mu^{\pi_t}\circ\pi_t) \leq \mathcal{D}_{\mathsf{KL}}(d_\mu^{\pi^*}\circ\pi^*\|d_\mu^{\pi_t}\circ\pi_t)\leq \log\Big(\frac{\|d_\mu^{\pi^*}/\mu\|_\infty}{(1-\gamma)p_{min}}\Big),$$ which further implies that 
    \begin{equation*}
        \sum_{s,a}d_\mu^{\pi^*}(s)\big(\pi_t(a|s)-\pi^*(a|s)\big)\Big[\phi_{s,a}^\top w_t - \cQ_\lambda^{\pi_t}(s,a)\Big]\leq 2\log\Big(\frac{\|d_\mu^{\pi^*}/\mu\|_\infty M_t(w_t)}{(1-\gamma)p_{min}}\Big).
    \end{equation*}
    Substituting the above inequality into \eqref{eqn:ly-drift-a} and following the inductive steps for $\Phi_t,~t=0,1,\ldots,T-1$ as in the proof of Theorem \ref{thm:q-npg}, the proof is concluded.
\end{proof}
}

\rev{
\begin{proof}[Proof of Prop. \ref{prop:er-qnpg}]
    Let $\overline{\mathsf{err}}_t(s,a):=w^\top\nabla\log\pi_t(a|s)-\Xi_\lambda^{\pi_t}(s,a).$ Then, we have 
    \begin{align*}
        \sum_{s,a}d_\mu^{\pi^*}(s)\pi^*(a|s)|\overline{\mathsf{err}}_t(s,a)| &\leq \sqrt{\sum_{s,a}d_\mu^{\pi^*}(s)\pi^*(a|s)|\overline{\mathsf{err}}_t(s,a)|^2}, \\
        &\leq \sqrt{|\cA|\sum_{s,a}\breve{d}^\star(s,a)|\overline{\mathsf{err}}_t(s,a)|^2},
    \end{align*}
    since $\overbrace{d_\mu^{\pi^*}(s)\frac{1}{|\cA|}}^{\breve{d}^\star(s,a)}\pi^*(a|s)|\cA|\leq |\cA|\breve{d}^\star(s,a)$. Similarly, $$\sum_{s,a}d_\mu^{\pi^*}(s)\pi_t(a|s)|\overline{\mathsf{err}}_t(s,a)|\leq \sqrt{|\cA|\sum_{s,a}\breve{d}^\star(s,a)|\overline{\mathsf{err}}_t(s,a)|^2}.$$
    Using these two bounds in $(\blacksquare)$ in \eqref{eqn:ly-drift-b} for change-of-measure, we obtain
    \begin{align*}
        (\blacksquare) \leq 2\sqrt{|\cA|\sum_{s,a}\breve{d}^\star(s,a)|\overline{\mathsf{err}}_t(s,a)|^2}
        &\leq 2\sqrt{|\cA|\cdot\|\breve{d}^\star/\breve{d}_\nu^{\pi_t}\|_\infty\sum_{s,a}\breve{d}_\nu^{\pi_t}(s,a)|\overline{\mathsf{err}}_t(s,a)|^2},\\
        &\leq 2\sqrt{\frac{|\cA|\cdot\|\breve{d}^\star/\nu\|_\infty}{1-\gamma}\sum_{s,a}\breve{d}_\nu^{\pi_t}(s,a)|\overline{\mathsf{err}}_t(s,a)|^2},
    \end{align*}
    where the last inequality is due to $\breve{d}_\nu^{\pi_t}(s,a)\geq (1-\gamma)\nu(s,a),\forall s,a$. By substituting the above bound for $(\blacksquare)$ into \eqref{eqn:ly-drift-b} and following identical steps, we prove the result.
\end{proof}

}

\subsection{Convergence Analysis of Entropy-Regularized NPG}
\begin{proof}[Proof of Theorem \ref{thm:main}]  In the following, we will bound the terms in \eqref{eqn:drift} for step-sizes $\eta_t = \eta$ and $g_t = w_t$. First, note that
\begin{align}
    \nonumber -\eta\sum_{s,a}&\dopt \Big(\nabla_\theta^\top\log\pi_t(a|s)w_t-\big(\cQ_\lambda^{\pi_t}(s,a)-\lambda\log\pi_t(a|s)\big)\Big) \\ \nonumber  &
    \leq \eta \sqrt{\sum_{s,a}\dopt\Big(\nabla_\theta^\top\log\pi_t(a|s)w_t-\big(\cQ_\lambda^{\pi_t}(s,a)-\lambda\log\pi_t(a|s)\big)\Big)^2},\\
    & \label{eqn:drift-a} \leq \eta\sqrt{C_t L(w_t,\theta_t)} \leq \eta\sqrt{C^\star\ea},
\end{align}
where the second inequality holds by Cauchy-Schwarz inequality and Assumption \ref{assumption:dist-mismatch}, which implies $C_t \leq C^\star <\infty$, and the last inequality follows from the compatible function approximation error.

By the entropy-regularized NPG update, $w_t = (1-\gamma)\Big[G^{\pi_t}(\mu)\Big]^{-1}\nabla_\theta \cV_\lambda^{\pi_t}(\mu).$ By using this in \eqref{eqn:grad-log}, we have $\|\nabla_\theta\log\pi_t(a|s)\|_2 \leq 2$ by triangle inequality. From Proposition \ref{thm:policy-gradient}, we conclude that:
\begin{equation*}
    \|\nabla_\theta \cV_\lambda^{\pi_t}(\mu)\|_2 \leq \frac{2}{1-\gamma}\bE_{s\sim d_\mu^{\pi_t},a\sim\pi_t(\cdot|s)}[\cQ_\lambda^{\pi_t}(s,a)-\lambda\log\pi_t(a|s)],
\end{equation*}
since $\cQ_\lambda^{\pi_t}(s,a)-\lambda\log\pi_t(a|s)\geq 0$. This implies $\|\nabla_\theta \cV_\lambda^{\pi_t}(\mu)\|_2 \leq \frac{2}{1-\gamma}\sum_{s}d_\mu^{\pi_t}(s)\cV_\lambda^{\pi_t}(s).$ By Lemma \ref{lemma:prob-bounds}, we have $\|[G^{\pi_t}(\mu)]^{-1}\|_2 \leq 1/\sigma$. Using these two results with Cauchy-Schwarz inequality, we obtain:
\begin{equation}
    \|w_t\|_2 \leq \frac{2}{\sigma}\sum_{s}d_\mu^{\pi_t}(s)\cV_\lambda^{\pi_t}(s).
    \label{eqn:drift-b}
\end{equation}
By substituting \eqref{eqn:drift-a} and \eqref{eqn:drift-b} into \eqref{eqn:drift}, we have the following inequality:
\begin{multline}
\Phi(\pi_{t+1})-\Phi(\pi_t) \leq -\eta\lambda\Phi(\pi_t) - \eta(1-\gamma)\Big(\cV_\lambda^{\pi^*}(\mu)-\cV_\lambda^{\pi_t}(\mu)\Big)+\eta\sqrt{C^\star\ea}
\\ -\eta\sum_{s,a}\dopt \cV_\lambda^{\pi_t}(s) + \frac{2\eta^2}{\sigma^2}\Big(\sum_{s}d_\mu^{\pi_t}(s)\cV_\lambda^{\pi_t}(s)\Big)^2,
\end{multline}
where the step-size $\eta$ is chosen to make the summation of the last two terms on the RHS negative by using the bounds on $\cV_\lambda^\pi(s)$ provided in \eqref{eqn:val-bounds}. Therefore, we have the following Lyapunov drift inequality:
\begin{equation}
    \Phi(\pi_{t+1})-\Phi(\pi_t) \leq -\eta\lambda\Phi(\pi_t) - \eta(1-\gamma)\Big(\cV_\lambda^{\pi^*}(\mu)-\cV_\lambda^{\pi_t}(\mu)\Big)+\eta\sqrt{C^\star\ea}.
    \label{eqn:drift-final}
\end{equation}

We will use \eqref{eqn:drift-final} in two ways to obtain the results in Theorem \ref{thm:main}. Letting $$\Delta_t = \cV_\lambda^{\pi^*}(\mu)-\cV_\lambda^{\pi_t}(\mu),$$ note that
\begin{equation}
    \Phi(\pi_{t+1})\leq (1-\eta\lambda)\Phi(\pi_t) - \eta(1-\gamma)\Delta_t +\eta\sqrt{C^\star\ea}.
\end{equation}

By induction and noting that $\Phi(\pi_0) \leq \log|\cA|$, we obtain: \begin{equation}\Phi(\pi_{t+1}) \leq (1-\eta\lambda)^{t+1}\log|\cA| - \eta(1-\gamma)\sum_{k=0}^t(1-\lambda\eta)^{t-k}\Delta_k + \eta\sum_{k=0}^t(1-\eta\lambda)^{t-k}\sqrt{\ea{C^\star}},
\label{eqn:contraction}
\end{equation}
for any $t\geq 0$.

Using the bound on $\Phi(\pi_T)$ and rearranging the terms in the Lyapunov drift inequality \eqref{eqn:drift-final}, we bound the optimality gap for the last iterate:
$$\cV_\lambda^{\pi^*}(\mu) - \cV_\lambda^{\pi_T}(\mu) \leq \big(1-\lambda\eta\big)^T\frac{\log|\cA|}{\eta(1-\gamma)} + \frac{\sqrt{\ea C^\star}}{\lambda\eta(1-\gamma)}.$$ Using \eqref{eqn:contraction},
\begin{align}\min_{0\leq k \leq t}\Delta_k &\leq \frac{1}{\sum_{k=0}^t(1-\eta\lambda)^{t-k}}\sum_{k=0}^t(1-\eta\lambda)^{t-k}\Delta_{k},\\
&\leq \frac{(1-\eta\lambda)^{t+1}}{\sum_{k=0}^t(1-\eta\lambda)^k\eta(1-\gamma)}\log|\cA|+\frac{\sqrt{C^\star\ea}}{1-\gamma},
\end{align}

which concludes the proof.
\end{proof}
In the following, we provide a proof sketch for Lemma \ref{lemma:prob-bounds}.

\begin{proof}[Proof sketch for Lemma \ref{lemma:prob-bounds}] The proof consists of three steps.

\textbf{Step 1:} For any policy $\pi\in\Pi$ with $\min_{a\in\cA, s\in supp(\mu)}\pi(a|s) \geq p$ for $p > 0$, we can show that $\sigma_1(G^\pi(\mu)) \geq \sigma(p) > 0$ for some $\sigma(p) > 0$ under Assumption \ref{assumption:basis}. The proof follows from noting that $$u^\top F(\mu) u = \sum_s\mu(s)\text{Var}_{a\sim \mathsf{Unif}(\cA)}(\phi_{s,a}^\top u),$$ for any $u\in\bR^d$. Therefore, for any $u\in\bR^d$ and $s\in\cS$ such that $$\mu(s)\text{Var}_{a\sim \mathsf{Unif}(\cA)}(\phi_{s,a}^\top u) > 0,$$ we have $\mu(s)\text{Var}_{a\sim \pi(\cdot|s)}(\phi_{s,a}^\top u) > 0$ since $\min_{a\in\cA}\pi(a|s) > 0$.

\textbf{Step 2:} In the second step, we show that $\Phi(\pi)\leq \varepsilon$ for $\varepsilon > 0$ implies the following: \begin{align*}\pi(a|s) \geq \exp\Big(-\frac{\varepsilon}{\delta_\mu^*(s,a)}-\frac{h(\pi^*(\cdot|s))}{\pi^*(a|s)}\Big) = p^*(s,a,\varepsilon) > 0,\end{align*} for any $s\in supp(d_\mu^{\pi^*})$ where $\delta_\mu^{*}$ is the state-action visitation distribution under $\pi^*$. This bound directly follows from the definition of the potential function $\Phi$ (see Definition \ref{def:pf}).

\textbf{Step 3:} Let $p = \min_{s\in\mu(s),a\in\cA} p^*(s,a,\varepsilon)$ for $\varepsilon = \log|\cA| + \frac{\sqrt{C^\star\cdot \ea}}{\lambda}.$ For any policy $\pi\in\Pi$ with $\min_{s\in supp(\mu),a\in\cA}\pi(a|s) \geq p,$ we have shown in Step 1 that $\sigma_1(G^\pi(\mu)) \geq \sigma(p) = \sigma$. Let the step-size be $\eta \leq \min\{\sigma^2\eta_0, \frac{1}{2\lambda}\}$ with $\eta_0 = \frac{(1-\gamma)r_{min}}{(r_{max}+\lambda|\cA|)^2},$ and let $\tau = \inf\Big\{t\geq 1:\eta > \min\Big\{\Big(\sigma_1(G^{\pi_t}(\mu)\Big)^2\eta_0, \frac{1}{2\lambda}\Big\}\Big\}.$ Then, the inequality \eqref{eqn:contraction} holds for any $t < \tau$. Lemma \ref{lemma:prob-bounds} holds if and only if $\tau = \infty$. Suppose to the contrary that $\tau < \infty$. Hence, 
\begin{align*}
    \Phi(\pi_\tau) &\leq (1-\lambda\eta)^\tau\log|\cA|+\frac{\sqrt{C^\star\cdot\ea}}{\lambda}\leq \log|\cA|+\frac{\sqrt{C^\star\cdot\ea}}{\lambda},
\end{align*} which implies $\min\limits_{\substack{s\in supp(\mu)\\ a\in\cA}}\pi_{\tau}(a|s) \geq p,$ and therefore $\sigma_1(G^{\pi_\tau}(\mu)) \geq \sigma(p)$ by Step 1, which contradicts with the definition of $\tau$. This implies that $\tau = \infty$. Hence, the inequality in \eqref{eqn:contraction} holds and we have $\min_{s\in supp(\mu),a\in\cA}\pi_t(a|s) \geq p > 0$ and $\sigma_1(G^{\pi_t}(\mu)) \geq \sigma(p) = \sigma > 0$ for any $t\geq 1$, which concludes the proof.
\end{proof}

\subsection{Regularity of Random Features}\label{app:random-features}
In this section, we provide theoretical insights on the regularity of an important class of random features \cite{rahimi2007random} in the sense of Assumption \ref{assumption:basis}.

Consider a simple setting where $|\cA|=2$ and $\mu(s) = 1/|\cS|$ for all $s\in\cS$. The result can be extended to the general case by using similar arguments. For each $(s,a)\in\cS\times\cA$, $\phi_{s,a} \stackrel{iid}{\sim} \mathcal{N}(0,I_d/2)$. For this setting, we have Proposition \ref{prop:gaussian-regularity}, which implies that in the function approximation setting where $d \ll |\cS\times \cA|$, the ensemble of Gaussian random feature vectors satisfies the regularity condition in Assumption \ref{assumption:basis} with high probability.

\begin{proof}[Proof of Proposition \ref{prop:gaussian-regularity}]
Let $\Omega_d = \{z\in\bR^d:\|z\|_2 = 1\}$. For any $z\in\Omega_d$, we have: $$z^\top F(\mu) z = \frac{1}{4|\cS|}\sum_{s\in\cS}\Big([\phi_{s,1}-\phi_{s,2}]^\top z\Big)^2.$$ Let $\varphi_s \stackrel{iid}{\sim} \mathcal{N}(0,I_d)$. Then, for any $x>0$, 
\begin{align*}\sigma_1(F(\mu)) \geq \frac{x^2}{4}\min_{z\in\Omega_d}\frac{1}{|\cS|}\sum_{s\in\cS}\mathbbm{1}\{|\varphi_s^\top z| \geq x\}&\geq \frac{x^2}{4}\min_{z:\|z\|_2 \leq 1}\frac{1}{|\cS|}\sum_{s\in\cS}\mathbbm{1}\{|\varphi_s^\top z| \geq x\},\\
&= \frac{x^2}{4}\Big(1-\max_{z:\|z\|_2 \leq 1}\frac{1}{|\cS|}\sum_{s\in\cS}\mathbbm{1}\{|\varphi_s^\top z| \leq x\}\Big).
\end{align*}
By using the Rademacher complexity bound in \cite[Lemma 4]{satpathi2020role} to obtain and upper bound on $\max_{z:\|z\|_2 \leq 1}\frac{1}{|\cS|}\sum_{s\in\cS}\mathbbm{1}\{|\varphi_s^\top z| \leq x\}$, we conclude the proof.
\end{proof}

Another class of random features is the so-called neural tangent kernel (NTK) features, which attracted significant attention for the theoretical analysis of neural networks \cite{jacot2018neural, wang2019neural, liu2019neural}. Each state-action pair $(s,a)$ is represented by a vector $\varphi(s,a)\in\bR^{d^\prime}$. For a single-layer neural network of width $m > 1$, the NTK feature is defined as follows:
    $\phi_{s,a} = \left[\frac{1}{\sqrt{m}}c_i\cdot\varphi(s,a)\cdot\mathbbm{1}\{W_i^\top\varphi(s,a) \geq 0\}\right]_{i\in[m]},$
where $c_i\sim Rademacher$, $W_i\sim \mathcal{N}(0, I_{d^\prime})$ for $i\in\{1,2,\ldots,m\}$. In this case, $d = m\cdot d^\prime$. For randomly-generated $\{\varphi(s,a)\sim\mathcal{N}(0, I_{d^\prime}):(s,a)\in\cS\times\cA\}$, we present the minimum eigenvalue of $$F(\mu) = \bE_{s\sim\mu,a\sim \mathsf{Unif}(\cA)}[\nabla\log\pi_0(a|s)\nabla^\top\log\pi_0(a|s)],$$ in Figure \ref{fig:ntk}. Note that $\sigma_1(G^{\pi_0}(\mu)) \geq (1-\gamma)\sigma_1(F(\mu))$ since $d_\mu^{\pi_0}(s) \geq (1-\gamma)\mu(s)$ for all $s$.
\begin{figure}
    \centering
    \includegraphics[width = .6\columnwidth]{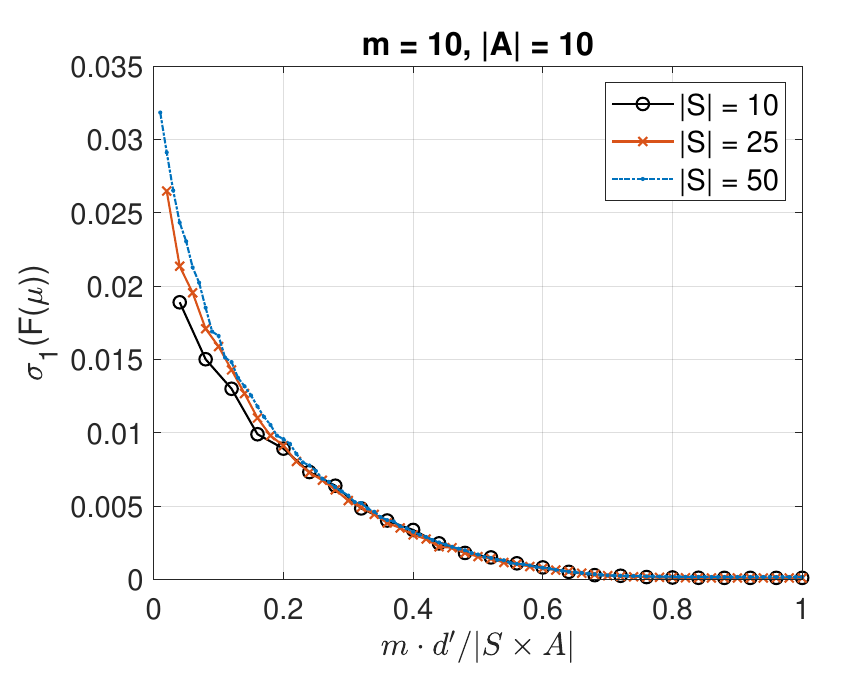}
    \caption{Minimum eigenvalue of $F(\mu)$ for neural tangent kernel features.}
    \label{fig:ntk}
\end{figure}
Figure \ref{fig:ntk} indicates that NTK features satisfy the regularity condition given in Assumption \ref{assumption:basis} for $d \ll |\cS\times\cA|$, i.e., in the function approximation regime.

\section{Proofs for Sample-Based Entropy-Regularized NPG}\label{app:actor-critic}
\subsection{Proof of Proposition \ref{prop:sample-based-npg}}
\begin{proof}
For any $t\leq T$ and $w\in\bR^d$, let
\begin{align}
    L_{0,t}(w) &= \bE_{s\sim d_\mu^{\pi_t},a\sim\pi_t(\cdot|s)}\big[\big(\nabla^\top\log\pi_t(a|s)w-\Xi_\lambda^{\pi_t}(s,a)\big)^2\big],\\
    \widehat{L}_{0,t}(w) &= \bE_{s\sim d_\mu^{\pi_t},a\sim\pi_t(\cdot|s)}\big[\big(\nabla^\top\log\pi_t(a|s)w-\widehat{\Xi}_\lambda^{\pi_t}(s,a)\big)^2\big].
\end{align}
Also, it is easy to see that $\bE_{(s,a)\sim d_\mu^{\pi_t}\circ\pi_t}\left[\left(\widehat{\Xi}_\lambda^{\pi_t}(s,a)-{\Xi}_\lambda^{\pi_t}(s,a)\right)^2\right] \leq \epsilon_{critic}.$
Thus, by using the inequality $(x+y)^2 \leq 2x^2 + 2y^2,~x,y\in\bR$, we have:
\begin{equation}\min_w~\widehat{L}_{0,t}(w) \leq \min_w~2L_{0,t}(w)+2\epsilon_{critic}.\label{eqn:error-analysis-a}\end{equation}
Similarly,
${L}_{0,t}(w_t) \leq 2L_{0,t}(w_t)+2\epsilon_{critic}.$
Thus, taking expectation over the samples, we obtain:
\begin{align*}
    \bE[L_{0,t}(w_t)] &\leq 2\bE[\widehat{L}_{0,t}(w_t)] + 2\epsilon_{critic} \leq 2\min_w\widehat{L}_{0,t}(w)+2\epsilon_{actor}+2\epsilon_{critic},\\
    &\leq 4\min_wL_{0,t}(w) + 2\epsilon_{actor} + 6\epsilon_{critic},
\end{align*}
where the second line follows from the definition of $\epsilon_{actor}$ and the last line follows from \eqref{eqn:error-analysis-a}. Since $\sup_{t\geq 0}\min_w~L_{0,t}(w)\leq \epsilon(R)$, the proof follows.
\end{proof}

\bibliographystyle{siamplain}
\bibliography{references}
\end{document}